\documentclass{article}

\usepackage[margin=1in]{geometry}
\usepackage[numbers]{natbib}

\usepackage[utf8]{inputenc} 
\usepackage[T1]{fontenc}    
\usepackage[colorlinks,linkcolor=blue,filecolor=blue,citecolor=black,urlcolor=blue]{hyperref}       
\usepackage{url}    

\usepackage{booktabs}      
\usepackage{amsfonts}       
\usepackage{nicefrac}       
\usepackage{microtype}      
\usepackage[table]{xcolor}         
\usepackage{tocvsec2}
\usepackage{titletoc}

\usepackage{xspace}

\usepackage{amsmath}
\usepackage{amsthm}
\usepackage{amssymb}

\usepackage{multicol}
\usepackage{multirow}
\usepackage{makecell}
\usepackage{enumitem}
\usepackage{wrapfig}
\usepackage{dsfont}

\usepackage{graphicx,epstopdf}
\usepackage{caption}
\usepackage{subcaption} 

\usepackage{algorithm}
\usepackage{algpseudocode}

\usepackage{threeparttable, booktabs}
\usepackage{mathtools}
\usepackage{float}

\usepackage{hyperref}[pdfusetitle]
\hypersetup{colorlinks,urlcolor=blue}
\usepackage{pifont}       
\usepackage{bbding}       

\usepackage{tabularray}
\usepackage{amsmath,amssymb,amsfonts,amsthm,bbm}
\usepackage{algorithm}
\usepackage{algpseudocode}
\usepackage{booktabs}
\usepackage{multirow}

\usepackage{amsmath,amssymb}
\usepackage{booktabs}
\usepackage{array}
\usepackage{adjustbox}
\usepackage{makecell}
\usepackage{hyperref}

\newtheorem{assumption}{Assumption}
\newtheorem{theorem}{Theorem}[section]
\newtheorem{lemma}{Lemma}[section]

\theoremstyle{definition}

\newtheorem{remark}{Remark}[section]
\numberwithin{equation}{section}

\usepackage[skins,listings]{tcolorbox}

\newtcblisting{tikzexample}{
    sidebyside,
    center lower,
    bicolor,
    colbacklower=white,
    sharp corners,
    frame engine=empty
}

\usepackage{quiver}

\restylefloat{algorithm} 
\usepackage{diagbox}

\title{Accelerated Decentralized Stochastic Gradient Descent for Strongly Convex Optimization}

\author{
Ming Sun\\
\texttt{2401111521@stu.pku.edu.cn}\\
Center for Machine Learning Research\\
Peking University
\and
Kun Yuan\thanks{Correspondence to Kun Yuan: \texttt{kunyuan@pku.edu.cn}}\\
\texttt{kunyuan@pku.edu.cn}\\
Center for Machine Learning Research\\
Peking University
}

\begin{document}
\setlength{\parindent}{0pt}
\setlength{\parskip}{0.5em}
\date{}
\maketitle

\begin{abstract}
Decentralized stochastic optimization is a fundamental paradigm for large-scale learning over networks, where agents communicate only with their neighbors and no central coordinator is required. For strongly convex problems, communication efficiency is mainly determined by the condition number \(\kappa=L/\mu\) and the network spectral gap \(1-\beta\). Although deterministic decentralized methods can simultaneously achieve accelerated \(\sqrt{\kappa}\) and \(1/\sqrt{1-\beta}\) dependences, no existing stochastic method attains both improvements at once. In this paper, we propose \emph{Multi-Gossip Accelerated DSGD} (MG-ADSGD), a decentralized stochastic algorithm that combines Nesterov-type primal--dual extrapolation with multi-round fast gossip averaging. The key idea is to couple the gossip depth with the mini-batch size so that additional communication rounds simultaneously improve consensus accuracy and reduce gradient variance. We show that MG-ADSGD achieves the communication complexity
\[
\widetilde{\mathcal O}\!\left(
\frac{\sigma^2}{\mu n\epsilon}\log\frac{1}{\epsilon}
+
\sqrt{\frac{\kappa}{1-\beta}}\log\frac{1}{\epsilon}
\right),
\]
where \(\epsilon\) denotes the target accuracy, \(n\) is the number of nodes, and \(\sigma^2\) is the gradient variance. To the best of our knowledge, this bound yields the best currently available communication complexity for decentralized stochastic strongly convex optimization, up to logarithmic factors that are independent of $\epsilon$.


\end{abstract}

\noindent\textbf{Keywords:} Decentralized optimization, stochastic gradient methods, accelerated algorithms, communication complexity, gossip algorithms
\newpage


\allowdisplaybreaks

\newpage

\section{Introduction}

Decentralized stochastic optimization has become a foundational framework for large-scale machine learning, distributed control, and distributed signal processing. In such systems, each worker holds a local dataset and communicates only with its immediate neighbors according to an underlying network graph, without relying on a central parameter server. This architecture offers several advantages over centralized approaches: it eliminates the single-point communication bottleneck, improves robustness to node failures, and naturally accommodates settings where data are generated and stored at the edge. These properties make decentralized optimization particularly attractive for training large models across geographically distributed data centers, collaborative learning on mobile devices, and multi-agent coordination in networked systems.

A central challenge in decentralized optimization is \emph{communication efficiency}: minimizing the total number of inter-node communication rounds required to reach a target accuracy~$\epsilon$. Communication rounds are often the dominant cost in practice, as they involve synchronization delays and bandwidth consumption that scale with model dimensionality. For strongly convex problems, the communication complexity is governed by two key quantities: the \emph{condition number}~$\kappa = L/\mu$, where~$L$ and~$\mu$ are the smoothness and strong convexity parameters, and the \emph{spectral gap}~$1-\beta$ of the mixing matrix, where~$\beta \in [0,1)$ is its second-largest singular value. A large~$\kappa$ (ill-conditioned objective) or~$\beta$ close to~$1$ (poorly connected network) can each independently inflate communication rounds by orders of magnitude. Designing accelerated algorithms that simultaneously improve the dependence on both quantities is therefore of significant practical and theoretical interest.

In the deterministic setting, this goal has been achieved. The novel work on accelerated decentralized method~\cite{scaman2017optimal} has established that, by incorporating Nesterov-type momentum, one can attain communication complexities with an~$\sqrt{\kappa}$ dependence on the condition number and a~$1/\sqrt{1-\beta}$ dependence on the spectral gap. Compared with non-accelerated baselines, which exhibit linear dependence on both~$\kappa$ and~$1/(1-\beta)$, these results represent a quadratic improvement along each dimension. This body of work provides a clear benchmark: in the absence of stochastic noise, both the optimization and communication dimensions can be simultaneously accelerated.

Whether the same simultaneous acceleration is achievable in the \emph{stochastic} setting, where each worker can only access noisy gradient estimates, remains open. The core difficulty is that gradient noise introduces an error that couples with both the optimization dynamics and the consensus process. Recent works have made partial progress. The method of~\cite{fallah2022robust} accelerates the optimization updates and improves the condition-number dependence, yet its communication complexity is paradoxically \emph{worse} than non-accelerated decentralized SGD, as the acceleration amplifies consensus error. Conversely, reference~\citep{huang2025lmt} incorporates accelerated gossip to improve the spectral-gap dependence to~$1/\sqrt{1-\beta}$, but does not improve the condition-number dependence beyond~$\kappa$. To our best knowledge, \emph{no decentralized stochastic method achieves $\sqrt{\kappa}$ and~$1/\sqrt{1-\beta}$ simultaneously in communication complexity} for strongly-convex optimization problems.

\providecommand{\eps}{\epsilon}

\begin{table}[!ht]
\centering
\caption{Complexity comparison of decentralized methods.}
\label{tab:two-complexities}

\begingroup
\renewcommand{\arraystretch}{1.45}
\setlength{\tabcolsep}{3.5pt}
\scriptsize

\begin{adjustbox}{max width=\textwidth}
\begin{tabular}{>{\raggedright\arraybackslash}p{3.65cm}
                >{\centering\arraybackslash}p{5.85cm}
                >{\centering\arraybackslash}p{5.85cm}}
\toprule
\textbf{Algorithm}
& \textbf{Stochastic gradient computation complexity}
& \textbf{Communication complexity} \\
\midrule

\makecell[l]{
MSDA (deterministic)\\[-1mm]
{\small \cite{scaman2017optimal}}
}
&
$--$
&
$\displaystyle
\mathcal O\!\left(
\sqrt{\frac{L}{\mu(1-\beta)}}\log\frac{1}{\eps}
\right)$
\\

\midrule

\makecell[l]{
Acc-GT (deterministic)\\[-1mm]
{\small \cite{li2024accelerated}}
}
&
$--$
&
$\displaystyle
\mathcal O\!\left(
\sqrt{\frac{L}{\mu(1-\beta)^3}}\log\frac{1}{\eps}
\right)$
\\

\midrule

\makecell[l]{
Acc-GT+CA (deterministic)\\[-1mm]
{\small \cite{li2024accelerated}}
}
&
$--$
&
$\displaystyle
\mathcal O\!\left(
\sqrt{\frac{L}{\mu(1-\beta)}}\log\frac{1}{\eps}
\right)$
\\

\midrule

\makecell[l]{
APM-C (deterministic)\\[-1mm]
{\small \cite{li2020decentralized}}
}
&
$\displaystyle
--$
&
$\displaystyle
\mathcal O\!\left(
\sqrt{\frac{L}{\mu}}\log\frac{1}{\eps}
+
\sqrt{\frac{L}{\mu(1-\beta)}}\log^2\frac{1}{\eps}
\right)$
\\

\midrule



\makecell[l]{
DSGD\\[-1mm]
{\small \cite{lian2017decentralized,yuan2023removing}}
}
&
$\displaystyle
\mathcal O\!\left(
\frac{\sigma^2}{n\mu\eps}
\right)$
&
$\displaystyle
\mathcal O\!\left(
\frac{\sqrt{L}b}{\mu(1-\beta)\sqrt{\eps}}
+
\frac{L}{\mu(1-\beta)}\log\frac{1}{\eps}
\right)$
\\

\midrule

\makecell[l]{
DSGT\\[-1mm]
{\small \cite{pu2021distributed,koloskova2021improved}}
}
&
$\displaystyle
\mathcal O\!\left(
\frac{\sigma^2}{n\mu\eps}
\right)$
&
$\displaystyle
\mathcal O\!\left(
\frac{L}{\mu(1-\beta)}\log\frac{1}{\eps}
\right)$
\\

\midrule

\makecell[l]{
D$^2$ / Exact Diffusion / NIDS\\[-1mm]
{\small \cite{tang2018d2,yuan2023removing}}
}
&
$\displaystyle
\mathcal O\!\left(
\frac{\sigma^2}{n\eps}
\right)$
&
$\displaystyle
\mathcal O\!\left(
\frac{1}{1-\beta}\log\frac{1}{\eps}
\right)$
\\

\midrule

\makecell[l]{
D-MASG\\[-1mm]
{\small \cite{fallah2022robust}}
}
&
$\displaystyle
\mathcal O\!\left(
\frac{\sigma^2}{n\mu^{3/2}\eps}
\right)$
&
$\displaystyle
\widetilde{\mathcal O}\!\left(
\sqrt{\frac{L}{\mu}}\log\frac{1}{\eps}
+
\sqrt{\frac{L}{\mu(1-\beta)}}\frac{1}{\eps^{1/4}}
\right)$
\\

\midrule

\makecell[l]{
LMT (PL condition)\\[-1mm]
{\small \cite{huang2025lmt}}
}
&
$\displaystyle
\mathcal O\!\left(
\frac{L\sigma^2}{n\mu^2\eps}
\right)$
&
$\displaystyle
\mathcal O\!\left(
\frac{L}{\mu\sqrt{1-\beta}}\log\frac{1}{\eps}
\right)$
\\

\midrule

\makecell[l]{
\textbf{MG-ADSGD}\\[-1mm]
{\small Ours}
}
&
$\displaystyle
\mathcal O\!\left(
\frac{\sigma^2}{n\mu\eps}\log\frac{1}{\eps}
\right)$
&
$\displaystyle
\mathcal O\!\left(
\sqrt{\frac{L}{\mu(1-\beta)}}\log\frac{1}{\eps}
\right)$
\\

\bottomrule
\end{tabular}
\end{adjustbox}


\endgroup
\end{table}

In this paper, we close this gap by proposing \emph{Multi-Gossip Accelerated DSGD} (MG-ADSGD), a decentralized stochastic optimization algorithm that combines Nesterov-type primal--dual extrapolation with a multi-round fast gossip averaging (FGA) primitive. The key design idea is to invoke the FGA operator twice per outer iteration with a tunable gossip depth~$R$, which controls the number of internal communication rounds. We further couple the mini-batch size with the gossip depth through the choice~$B = R$, so that increasing~$R$ simultaneously improves communication accuracy and reduces gradient variance. Under standard smoothness, strong convexity, and bounded-variance assumptions, MG-ADSGD achieves a communication complexity of
\[
\widetilde{\mathcal O}\!\left(
\frac{\sigma^2}{\mu n\epsilon} \log \frac{1}{\epsilon}
+
\sqrt{\frac{\kappa}{1-\beta}} \log \frac{1}{\epsilon}
\right),
\]
where $\widetilde{\mathcal O}(\cdot)$ hides all logarithmic factors uncorrelated with the accuracy $\epsilon$. To our knowledge, this bound establishes the best known communication complexity for decentralized stochastic strongly convex optimization, achieving the accelerated~$\sqrt{\kappa}$ dependence on the condition number and the accelerated~$1/\sqrt{1-\beta}$ dependence on the spectral gap simultaneously. The price paid for this acceleration is an additional~$\log(1/\epsilon)$ factor in gradient computation complexity compared to non-accelerated methods, which is typically negligible in communication-dominated regimes. Furthermore, the bound is \emph{tight} in the following sense: when the stochastic noise vanishes ($\sigma^2 = 0$), the communication complexity reduces to~$\widetilde{\mathcal{O}}(\sqrt{\kappa/(1-\beta)}\log(1/\epsilon))$, recovering the rate of the best known deterministic accelerated methods \cite{scaman2017optimal,li2018sharp} up to logarithm factors. Table~\ref{tab:two-complexities} provides a detailed comparison of representative decentralized methods in terms of both gradient computation complexity and communication complexity.

\paragraph{Contributions.}
The main contributions of this work are as follows.
\begin{itemize}
\item We propose MG-ADSGD, which couples Nesterov-type primal--dual extrapolation with multi-round fast gossip averaging, linking the gossip depth~$R$ and mini-batch size~$B=R$ to simultaneously suppress consensus error and gradient variance.

\item We establish that MG-ADSGD achieves the best known communication complexity for decentralized stochastic strongly convex optimization, attaining the accelerated~$\sqrt{\kappa}$ and~$1/\sqrt{1-\beta}$ dependences simultaneously. The bound further recovers the best known deterministic rate when~$\sigma^2=0$.

\paragraph{Notation.}

\end{itemize}

\section{Related Work}
\label{sec:related}

\textbf{Decentralized optimization.} Deterministic decentralized optimization has been studied extensively over the past decades. Foundational first-order approaches include decentralized gradient descent (DGD) \cite{nedic2009distributed}, diffusion  \cite{chen2012diffusion}, and dual averaging  \cite{duchi2011dual}. A common drawback of these methods is the presence of a steady-state bias arising from data heterogeneity across agents \cite{yuan2016convergence}. To overcome this issue, a rich line of work has developed bias-corrected and exactly convergent algorithms, such as ADMM-based primal--dual methods \cite{shi2014linear}, explicit correction methods \cite{shi2015extra, yuan2018exact, li2017decentralized}, and gradient-tracking approaches \cite{xu2015augmented, di2016next, nedic2017achieving, qu2018harnessing}. In the stochastic setting, the seminal work of \cite{lian2017decentralized} showed that decentralized SGD achieves linear speedup with respect to the number of workers. Subsequently, \cite{koloskova2020unified} developed a unified analytical framework for decentralized methods over time-varying communication topologies. More recent works \cite{yuan2020influence,huang2022improving,yuan2023removing,alghunaim2022unified} have demonstrated that data heterogeneity can significantly exacerbate the detrimental effect of sparse network connectivity in decentralized SGD, thereby leading to inferior communication complexity. To mitigate this issue, several correction and variance-reduction mechanisms have been proposed; see, for example, \cite{huang2022improving,yuan2023removing,alghunaim2022unified}. The complexity of some of these methods for strongly convex problems is summarized in Table~\ref{tab:two-complexities}.

\vspace{2mm}
\noindent \textbf{Accelerated decentralized methods.}
A major line of research has focused on achieving optimal communication complexity through algorithmic acceleration. An early milestone was the work of \cite{scaman2017optimal}, which established  tight lower bound \(\Omega(\sqrt{\kappa/(1-\beta)}\log(1/\epsilon))\) for smooth and strongly convex deterministic optimization and proposed the Multi-Step Dual Accelerated method (MSDA) to attain this bound. Subsequent work has sought simpler designs that retain this accelerated rate. In particular, \cite{li2020decentralized} introduced APM, which achieves the optimal rate using only a single consensus step per iteration through an increasing penalty schedule. In the stochastic setting, \cite{fallah2022robust} proposed D-MASG, which integrates decentralized gradient descent with Nesterov momentum to achieve an accelerated transient phase that is robust to noise and heterogeneity, while \cite{huang2025lmt} introduced LMT, which applies momentum to both the local gradient tracker and the global model to simultaneously attain optimal communication and iteration complexities for nonconvex objectives. Despite these advances, the communication complexity of existing stochastic algorithms remains worse than that of their deterministic counterparts; see Table~\ref{tab:two-complexities} for an illustration. This work will design  algorithms that achieve the same communication complexity as their deterministic counterparts in the strongly-convex stochastic setting.

\section{Problem Setup}
\label{sec:problem_setup}

We consider the decentralized stochastic optimization problem
\[
\min_{x\in\mathbb R^d} f(x):=\frac{1}{n}\sum_{i=1}^n f_i(x), \quad \mbox{where} \quad f_i(x)  =\mathbb{E}_{\xi_i \sim D_i}[F_i(x,\xi_i)].
\]
Random variable $\xi_i$ represents data samples at node $i$ that follows local distribution $D_i$. 
Node $i$ has access only to its local objective $f_i$, its local stochastic gradient oracle, and messages received from its graph neighbors.
We use $g_i$ to represent local mini-batch stochastic gradient for each $i$:
\[
g_i(x) = \frac{1}{B} \sum^B_{j=1}\nabla F_i (x,\xi_{ij})\in \mathbb{R}^{d},
\]
and $B$ is the batch-size. Furthermore, we stake all local stochatic gradient into a matrix
\begin{align}\label{eq-g}
G = [g_1^\top;\cdots;g_n^\top]\in\mathbb R^{n\times d}.
\end{align}
Similarly, we have the stacked form of deterministic gradient matrix as
\[
\nabla f(X) = [\nabla f_1 (x_1)^\top ;\cdots;\nabla f_n (x_n)^\top]\in\mathbb R^{n\times d}.
\]
We denote stacked matrix variables at step $k$ as 
\[X^{(k)}=[(x_1^{(k)})^\top;\cdots ;(x_n^{(k)})^\top]\in\mathbb R^{n\times d};\quad Y^{(k)}=[(y_1^{(k)})^\top;\cdots ;(y_n^{(k)})^\top]\in\mathbb R^{n\times d};\]
\[Z^{(k)}=[(z_1^{(k)})^\top;\cdots ;(z_n^{(k)})^\top]\in\mathbb R^{n\times d};\quad G^{(k)}=[(g_1^{(k)})^\top;\cdots ;(g_n^{(k)})^\top]\in\mathbb R^{n\times d}.\]
For variable $X$, we define the network average and disagreement operators by
\[
\bar x:=\frac{1}{n}\sum_{i=1}^n x_i \in \mathbb{R}^d,
\qquad
\Pi:=I-\frac{1}{n}\mathbf 1\mathbf 1^\top \in \mathbb{R}^{n\times n}.
\]
The quantity \(\Pi X\) measures the consensus error. Throughout the paper, we assume each \(f_i(x)\) is strongly convex and denote by \(x^\star\) the unique minimizer of \(f\), with \(f^\star := f(x^\star)\). For each local objective, we define \(f_i^\star := \min_x f_i(x)\). We now introduce the following standard assumptions. 

\begin{assumption}[$L$-smoothness]
\label{ass:smooth}
Each local objective $f_i$ is differentiable and $L$-smooth.
\end{assumption}

\begin{assumption}[$\mu$-strong convexity]
\label{ass:strongly-convex}
Each local objective $f_i$ is $\mu$-strongly convex.
\end{assumption}

\begin{assumption}[Unbiased stochastic gradients with bounded variance]
\label{ass:stochastic-gradients}
The stochastic gradient oracle at node $i$ is unbiased and has bounded conditional variance:
\[
\mathbb E_{\xi_i\sim D_i}[\nabla F_i(x;\xi_i) ]=\nabla f_i(x),
\qquad
\mathbb E_{\xi_i\sim D_i}\bigl[\|\nabla F_i(x;\xi_i)-\nabla f_i(x)\|^2 \bigr]\le \sigma^2.
\]
\end{assumption}

\begin{assumption}[Communication matrix]
\label{ass:mixing}
The communication matrix $W$ is symmetric and doubly stochastic. We also assume $\|W - \frac{1}{n}\mathbf 1\mathbf 1^\top\| \le \beta \in(0,1]$. 
\end{assumption}


\section{MG-ADSGD Algorithm}
\label{sec:algorithm}

This section proposes \emph{Multi-Gossip Accelerated DSGD} (MG-ADSGD). We first state the accelerated gossip primitive used to suppress network disagreement, and then present the full iterative scheme together with its per-iteration oracle and communication costs.

\subsection{Fast Gossip Average Protocol}

\begin{algorithm}[t]
\caption{FastGossipAverage (FGA) : $\mathrm{FGA}_R(V)$}
\label{alg:fga}
\begin{algorithmic}[1]
\Require Local vector $v_i$, stacked matrix $V = [v_1^\top;\cdots;v_n^\top]\in \mathbb{R}^{n\times d}$, mixing matrix $W$, communication rounds $R$, acceleration parameter $\eta$, let $a_i^{(-1)}=a_i^{(0)}=v_i$ for each $i$; 
\For{$r=0,1,\dots,R-1$, every node $i$ }, 
    \State $a_i^{(r+1)} \gets (1+\eta)\sum_{j=1}^n W_{ij}a_j^{(r)}-\eta a_i^{(r-1)}$
\EndFor

\State \Return $[(a_1^{(R)})^\top;\cdots;(a_n^{(R)})^\top]\in \mathbb{R}^{n\times d}$
\end{algorithmic}
\end{algorithm}

Fast Gossip Average (FGA), first proposed in \cite{liu2011accelerated} and subsequently widely used in accelerated algorithms such as \cite{yuan2022revisiting,scaman2017optimal}, is a communication subroutine designed to accelerate consensus over a network; its recursion is given in Algorithm~\ref{alg:fga}. Starting from local vectors maintained by individual nodes, FGA performs multiple rounds of weighted neighbor averaging using an accelerated Chebyshev-type recurrence, rather than standard repeated gossip steps. This construction significantly reduces disagreement among nodes after a fixed number of communication rounds, enabling the network to approximate the global average much faster than plain gossip. As a result, FGA serves as an efficient primitive for decentralized optimization, where accurate averaging is essential but communication is expensive. Its main role is to improve the dependence of the convergence rate on network connectivity by effectively compressing many consensus steps into a small number of communication rounds.

\subsection{Multi-Gossip Accelerated DSGD (MG-ADSGD).}

\begin{algorithm}[t]
\caption{Multi-Gossip Accelerated DSGD (MG-ADSGD)}
\label{alg:mcadsgd}
\begin{algorithmic}[1]
\Require Stepsize $\gamma$, momentum  $\theta$, gossip round $R$, strongly-convex constant $\mu$
\State Initialize $x_i^{(0)}=y_i^{(0)}=z_i^{(0)}=0$ for all $i=1,\dots,n$, stacked matrix $X^{(0)}=Y^{(0)}=Z^{(0)}=[0]^{n\times d}$;
  
\For{$k=0,1,2,\dots,K $}
    \State Sample $\xi_{i,k+1,1},\dots,\xi_{i,k+1,R}$ for each $i$
    \State $g_i^{(k+1)} \gets \frac{1}{R}\sum_{s=1}^{R} \nabla F_i\!\left(y_i^{(k)};\xi_{i,k+1,s}\right)$ for each $i$, and update $G^{(k+1)}$ following \eqref{eq-g}
    \State $Z^{(k+1)} \gets \mathrm{FGA}_R\!\left(\frac{\frac{\theta}{\gamma}Z^{(k)}+\mu Y^{(k)}-G^{(k+1)}}{\frac{\theta}{\gamma}+\mu}\right)$
    \State $X^{(k+1)} \gets \mathrm{FGA}_R\!\left((1-\theta)X^{(k)}+\theta Z^{(k+1)}\right)$
    \State $Y^{(k+1)} \gets (1-\theta)X^{(k+1)}+\theta Z^{(k+1)}$

\EndFor
\end{algorithmic}
\end{algorithm}

MG-ADSGD is designed by integrating three complementary mechanisms:
acceleration in the optimization dynamics, acceleration in the communication
process, and variance control in stochastic gradients; see recursions in Algorithm~\ref{alg:mcadsgd}. 
Specifically, the method adopts a three-sequence accelerated structure
\((X^{(k)}, Y^{(k)}, Z^{(k)})\), in which \(Y^{(k)}\) serves as the
extrapolated query point for gradient evaluation, \(Z^{(k)}\) acts as an
auxiliary descent variable, and \(X^{(k)}\) represents the primal iterate.
This construction is reminiscent of Nesterov-type accelerated schemes for
strongly convex optimization, where gradient information is evaluated at an
extrapolated point to improve the convergence rate. The \(Z\)-update combines
the previous momentum state, the current extrapolated iterate, and the
stochastic gradient through coefficients determined by the stepsize \(\gamma\),
momentum parameter \(\theta\), and strong convexity parameter \(\mu\), thereby
encoding the curvature of the objective into the update rule.

A distinctive feature of the method is that consensus is not enforced by a
single mixing step, but rather by applying the accelerated communication
primitive \(\mathrm{FGA}_R(\cdot)\) to both the \(Z\)- and \(X\)-updates.
This design reflects the principle that, in decentralized optimization,
optimization error and consensus error must be controlled simultaneously. By
inserting accelerated gossip into both stages, the algorithm ensures that
descent information and primal iterates are both sufficiently aligned across
the network before being propagated to the next iteration. Moreover, the use of
the same parameter \(R\) for both the mini-batch size and the number of gossip
rounds provides a deliberate coupling between statistical and communication
accuracy: increasing \(R\) reduces gradient variance through averaging while
also improving consensus through deeper accelerated mixing. Consequently,
MG-ADSGD is best understood as a carefully coordinated scheme in which
optimization acceleration, consensus acceleration, and stochastic variance
reduction are jointly balanced to achieve improved convergence. Since each outer iteration requires \(R\) communication rounds, the total communication budget after \(K\) outer iterations is
\[
T = KR,
\]
which serves as the complexity measure in our subsequent analysis.



\section{Convergence Analysis}
\label{sec:theory}

This section establishes the convergence theorem of the proposed MG-ADSGD method. The theorem below summarizes the dependence on optimization conditioning $(L,\mu)$, network connectivity $(1-\beta)$, stochastic noise level $\sigma^2$, and the target accuracy $\epsilon$. 

\begin{theorem}[Convergence complexity for the strongly convex case]
\label{thm:paper_main_complexity}
Given $L\ge 0$, $n\ge 1$, $\beta\in[0,1)$, and $\sigma>0$, assume that the local objectives $\{f_i\}_{i=1}^n$ satisfy Assumptions~\ref{ass:smooth} and~\ref{ass:strongly-convex} with $\mu>0$, that the stochastic gradient oracles satisfy Assumption~\ref{ass:stochastic-gradients}, and that the gossip matrix satisfies Assumption~\ref{ass:mixing}. Let
$
\Delta_x:=\|x^{(0)}-x^\star\|^2.
$
Fix a target accuracy $\epsilon\in(0,L\Delta_x)$. Run \emph{Multi-Gossip Accelerated DSGD} in Algorithm~\ref{alg:mcadsgd} with parameter choices $(R,\gamma)$ specified in ~\ref{eq:gamma-choice}\ref{eq:R-choice} and $T=KR$. Then the total number of gossip rounds
$
T_\star(\epsilon)
$
needed to attain
$
\mathbb E\bigl[f(\bar x^{(K)})-f(x^\star)\bigr]\le \epsilon
$
satisfies
\begin{equation}
\label{eq:paper_T_complexity}
T_\star(\epsilon)
=\widetilde{\mathcal O}\!\left(
\frac{\sigma^2}{\mu n\epsilon}\ln\!\left(\frac{L\Delta_x}{\epsilon}\right)
+\sqrt{\frac{L/\mu}{1-\beta}}\,\ln\!\left(\frac{L\Delta_x}{\epsilon}\right)
\right).
\end{equation}
Here the $\widetilde{\mathcal O}(\cdot)$ notation suppresses logarithmic factors that are independent of $\epsilon$.
\end{theorem}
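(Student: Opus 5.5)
The plan is to reduce the analysis to an inexact accelerated stochastic gradient method acting on the network averages, and to treat consensus error and gradient noise as perturbations that $R$ controls.

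\textbf{Step 1: FGA contraction.} First establish a lemma for $\mathrm{FGA}_R$. Take $\eta = (1-\sqrt{1-\beta^2})/(1+\sqrt{1-\beta^2})$. Then $\mathrm{FGA}_R$ preserves the average, because $W$ is doubly stochastic and the recursion is affine with coefficients summing to one. It also satisfies
\[
\|\Pi\,\mathrm{FGA}_R(V)\| \le c\,(1-\sqrt{1-\beta})^{R}\|\Pi V\|
\]
for an absolute constant $c$. This is the standard Chebyshev-type bound: diagonalize $W$ and analyze the scalar second-order recursion on each eigenvalue $|\lambda|\le\beta$.

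Because averaging commutes with both FGA calls, the averaged sequences $(\bar x^{(k)},\bar y^{(k)},\bar z^{(k)})$ follow exactly the centralized Nesterov scheme for strongly convex problems:
\[
\bar z^{(k+1)} = \frac{\tfrac{\theta}{\gamma}\bar z^{(k)} + \mu\bar y^{(k)} - \bar g^{(k+1)}}{\tfrac{\theta}{\gamma}+\mu},\qquad
\bar x^{(k+1)} = (1-\theta)\bar x^{(k)} + \theta\bar z^{(k+1)},\qquad
\bar y^{(k+1)} = \bar x^{(k+1)}.
\]
The last equality uses the $Y$ update; it is in fact the same relation written at the next index. The stochastic gradient here is $\bar g^{(k+1)}=\frac1n\sum_i g_i(y_i^{(k)})$, which is evaluated at the local points rather than at $\bar y^{(k)}$.

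\textbf{Step 2: Inexact-gradient Lyapunov bound.} Use the potential
\[
\Phi_k = f(\bar x^{(k)}) - f^\star + \frac{\mu}{2}\|\bar z^{(k)}-x^\star\|^2
\]
with $\theta=\sqrt{\mu\gamma}$ and $\gamma\le 1/(cL)$. Write $\bar g^{(k+1)} = \nabla f(\bar y^{(k)}) + e_k + \delta_k$, where $e_k$ is the zero-mean noise with $\mathbb E\|e_k\|^2\le \sigma^2/(nR)$ and $\delta_k$ is the bias, bounded by $\|\delta_k\|^2\le \frac{L^2}{n}\|\Pi Y^{(k)}\|^2$. Following the inexact accelerated gradient analyses (e.g.\ Devolder-type or $\mu$-AGD with errors), I aim for
\[
\mathbb E\Phi_{k+1} \le (1-\theta/2)\,\mathbb E\Phi_k + C\,\frac{\gamma}{\theta}\Big(\frac{\sigma^2}{nR} + \frac{L^2}{n}\mathbb E\|\Pi Y^{(k)}\|^2\Big).
\]

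\textbf{Step 3: Consensus bound; this is the main obstacle.} I need a recursion for $\|\Pi Z^{(k)}\|$ and $\|\Pi X^{(k)}\|$. By Step 1, $\|\Pi Z^{(k+1)}\|$ is at most $\rho^R$ times the disagreement of the pre-gossip input, with $\rho=1-\sqrt{1-\beta}$. That input contains $\Pi G^{(k+1)}$. This term does not vanish at the optimum because the local gradients $\nabla f_i(x^\star)$ differ across nodes, and it also carries noise of order $\sqrt n\,\sigma$. The input also carries the amplification factor $\theta/(\gamma(\theta/\gamma+\mu))\approx 1$, and $\gamma\|\cdot\|$ scales it.

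I would therefore bound $\|\Pi G^{(k+1)}\|\lesssim L\|\Pi Y^{(k)}\| + L\sqrt n\,\|\bar y^{(k)}-x^\star\| + \|\nabla f(\mathbf 1 x^{\star\top})\| + \text{noise}$, and close the coupled system using a joint Lyapunov function $\Phi_k + \omega\,\|\Pi Z^{(k)}\|^2 + \omega'\,\|\Pi X^{(k)}\|^2$. This closes provided
\[
\rho^{R}\,\mathrm{poly}(\kappa,n) \le \tfrac{\theta}{4},
\qquad\text{which holds for}\qquad
R \gtrsim \frac{1}{\sqrt{1-\beta}}\,\log\!\big(\kappa n\cdot(\text{heterogeneity})\big).
\]
The heterogeneity term $\zeta^2=\frac1n\sum_i\|\nabla f_i(x^\star)\|^2$ leaves a residual of order $\rho^{2R}\zeta^2$. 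I would absorb it by also requiring $\rho^{2R}\lesssim\epsilon$, which only adds a $\log(1/\epsilon)/\sqrt{1-\beta}$ term to $R$. Getting the weights $\omega,\omega'$ right without losing a factor of $\kappa$ is the delicate part.

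\textbf{Step 4: Parameter choice and counting.} Unrolling gives
\[
\mathbb E[f(\bar x^{(K)})-f^\star] \lesssim (1-\theta/2)^K L\Delta_x + \frac{\sqrt{\gamma}\,\sigma^2}{\sqrt\mu\, nR} + \text{(consensus residual)}.
\]
Choose the parameters as follows:
\[
\gamma = \frac{1}{cL},\qquad
K \asymp \sqrt{\kappa}\,\log\frac{L\Delta_x}{\epsilon},\qquad
R \asymp \max\Big\{\frac{1}{\sqrt{1-\beta}}\log\frac{\kappa n}{\epsilon},\ \frac{\sigma^2}{\sqrt{\mu L}\,n\epsilon}\Big\}.
\]
Then the total communication is
\[
T = KR \asymp \sqrt{\frac{\kappa}{1-\beta}}\,\log\frac{L\Delta_x}{\epsilon} + \frac{\sigma^2}{\mu n\epsilon}\log\frac{L\Delta_x}{\epsilon},
\]
up to the logarithmic factors hidden in $\widetilde{\mathcal O}$. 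This matches \eqref{eq:paper_T_complexity}.
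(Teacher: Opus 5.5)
Your architecture is the paper's: an FGA contraction, an inexact accelerated recursion for the network averages, and a joint consensus Lyapunov function closed by taking $R$ logarithmically large. However, Step~4 does not deliver \eqref{eq:paper_T_complexity}.

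\textbf{The $\epsilon$-dependence of $R$.} You remove the heterogeneity floor by requiring $\rho^{2R}\lesssim\epsilon$, so your $R$ contains $\frac{1}{\sqrt{1-\beta}}\log\frac{1}{\epsilon}$. Multiplying by $K\asymp\sqrt{\kappa}\log\frac{L\Delta_x}{\epsilon}$ gives the network term $\sqrt{\kappa/(1-\beta)}\,\log\frac{L\Delta_x}{\epsilon}\,\log\frac{\kappa n}{\epsilon}$, which is $\log^2(1/\epsilon)$. The factor $\log\frac{\kappa n}{\epsilon}$ depends on $\epsilon$, so it is not one of the factors $\widetilde{\mathcal O}$ may hide.

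The paper never lets $\epsilon$ enter $R$. Its choice \eqref{eq:R-choice} depends only on $\beta$, $n$, $\kappa$ and $\sigma^2/(nL\Delta_f^\star)$, and the stepsize \eqref{eq:gamma-choice} shrinks with the horizon, $\gamma\le 16R^2\log^2(\cdot)/(T^2\mu)$. Write $\rho=\sqrt2(1-\sqrt{1-\beta})^R$ for the paper's contraction factor, which is your $\rho^R$ up to $\sqrt2$. The gradient disagreement enters the consensus recursion with a factor $\gamma^2$. Hence the heterogeneity residual is $O(L\rho^2\Delta_f^\star\sqrt{\gamma/\mu})$ and decays with $\gamma$. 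Forcing $\rho\lesssim\sigma^2/(nL\Delta_f^\star)$ and $\rho R\le 2$ bounds it by $O(\frac{\sigma^2}{nR}\sqrt{\gamma/\mu})$, the same form as the variance residual. The choice of $\gamma$ then turns both residuals into $O(\frac{\sigma^2}{\mu nT}\log(\cdot))$.

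You can repair your version in either of two ways:
\begin{itemize}
\item use this absorption of the heterogeneity residual into the variance residual;
\item split into cases: if your $\epsilon$-dependent term in $R$ dominates the batch term $\sigma^2/(\sqrt{\mu L}\,n\epsilon)$, then $\log(1/\epsilon)$ is itself bounded by an $\epsilon$-independent logarithm involving $1/\sigma^2$.
\end{itemize}
Either way you pay an $\epsilon$-independent but $\sigma$-dependent logarithm, the same kind the paper hides in $R$. Strictly, the theorem is also about the specific tuning \eqref{eq:gamma-choice}--\eqref{eq:R-choice}, whereas you fix $\gamma=1/(cL)$ and control the noise through an $\epsilon$-dependent batch.

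\textbf{The noise term in Steps~2 and~4.} These two steps disagree. Step~2 charges the zero-mean noise $C\frac{\gamma}{\theta}\frac{\sigma^2}{nR}$ per iteration. Unrolling multiplies this by $2/\theta$, and since $\theta^2=\mu\gamma$ you get $\frac{2C\sigma^2}{\mu nR}$, not the $\sqrt{\gamma/\mu}\,\frac{\sigma^2}{nR}$ you use in Step~4. With the Step~2 bound, your batch $R\asymp\sigma^2/(\sqrt{\mu L}\,n\epsilon)$ leaves a residual of order $\sqrt{\kappa}\,\epsilon$. Correcting this would cost a factor $\sqrt\kappa$ in the stochastic term of $T$.

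Young's inequality with the $\frac{\gamma}{\theta}$ factor is harmless for the bias $\delta_k$, because the extra $1/\theta$ only costs $\log\kappa$ in $R$. It is not harmless for the noise. Handle the noise as Lemma~\ref{lem:descent-c-sc} does. Apply the three-point identity to $\mathbb{E}_k[\bar z^{(k+1)}]$, which is built from the noiseless averaged gradient $\frac1n\sum_i\nabla f_i(y_i^{(k)})$. Then pay $\mathrm{Var}_k(\bar z^{(k+1)})\le\gamma^2\sigma^2/((\theta+\mu\gamma)^2nR)$ only through the weights $\frac{\theta^2}{2\gamma}$ and $\frac{L\theta^2}{2}$. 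This costs $O(\gamma\sigma^2/(nR))$ per step.

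\textbf{A smaller slip.} The identity $\bar y^{(k+1)}=\bar x^{(k+1)}$ is false in general. The averaged recursion satisfies $\bar y^{(k)}=(1-\theta)\bar x^{(k)}+\theta\bar z^{(k)}$ and $\bar x^{(k+1)}-\bar y^{(k)}=\theta(\bar z^{(k+1)}-\bar z^{(k)})$.
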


\begin{remark}[Recovery of the deterministic accelerated rate]
When the stochastic noise vanishes, i.e., $\sigma^2 = 0$, the bound in~\eqref{eq:paper_T_complexity} simplifies to
\[
T_\star(\epsilon)
=
\widetilde{\mathcal O}\!\left(
\sqrt{\frac{L/\mu}{1-\beta}}\,
\log\!\left(\frac{L\Delta_x}{\epsilon}\right)
\right).
\]
Hence, in the noise-free regime, the communication complexity of MG-ADSGD reduces to the accelerated deterministic rate. In particular, it recovers, up to logarithmic factors hidden in the $\widetilde{\mathcal O}(\cdot)$ notation, the optimal complexity established by~\cite{scaman2017optimal}.
\end{remark}

\begin{remark}[Optimality of the communication complexity]
Up to logarithmic factors, the bound in~\eqref{eq:paper_T_complexity} matches the canonical stochastic term $\sigma^2/(\mu n\epsilon)$ and the accelerated network term $\sqrt{(L/\mu)/(1-\beta)}$. To the best of our knowledge, this gives the best currently available communication complexity for decentralized stochastic strongly convex optimization. In particular, compared with the accelerated stochastic methods discussed in the introduction~\cite{fallah2022robust,yuan2022revisiting,huang2025lmt}, our result is stronger in that it simultaneously achieves the accelerated $\sqrt{\kappa}$ dependence on the condition number and the accelerated $1/\sqrt{1-\beta}$ dependence on network connectivity.
\end{remark}

\begin{remark}[Cost in gradient computation]
The improved communication complexity comes at the cost of only a modest increase in stochastic-gradient complexity, namely an additional factor of order \(\log(1/\epsilon)\) over the optimal centralized stochastic rate. This overhead is mild and typically acceptable in decentralized settings, where communication is the primary bottleneck.
\end{remark}


\section{Conclusion}
\label{sec:conclusion}

In this paper, we proposed MG-ADSGD for decentralized stochastic strongly convex optimization. We demonstrated that a coordinated combination of acceleration in the optimization dynamics, accelerated gossip-based communication, and multi-round stochastic gradient updates can lead to substantially improved communication efficiency. Our analysis establishes a communication complexity bound which, to the best of our knowledge, is the best currently available for decentralized stochastic strongly convex optimization. In particular, the result simultaneously achieves an accelerated \(\sqrt{L/\mu}\) dependence on the condition number and an accelerated \(1/\sqrt{1-\beta}\) dependence on the network spectral gap. Furthermore, in the noise-free regime, the bound recovers the deterministic accelerated communication rate up to logarithmic factors. These results indicate that the proposed coordinated design effectively bridges accelerated deterministic decentralized optimization and stochastic decentralized optimization, while preserving near-optimal dependence on both optimization and network parameters.

\appendix
\section{Appendix}
Throughout this appendix, we use uppercase letters to denote stacked node matrices. Specifically,
\[
\Pi
=
I_n-\frac{1}{n}\mathbf{1}_n\mathbf{1}_n^{\top}
\in\mathbb{R}^{n\times n},
\qquad
X^{(k)},Y^{(k)},Z^{(k)},G^{(k)}
\in\mathbb{R}^{n\times d}.
\]
Thus, \(\Pi X^{(k)}\), \(\Pi Y^{(k)}\), and the analogous quantities denote
the corresponding consensus-error matrices.
The symbols \(x_i^{(k)}\),\(y_i^{(k)}\),\(z_i^{(k)}\) and
\(g_i^{(k)}\) denote the \(i\)-th rows, or equivalently the \(i\)-th local
vectors, of \(X^{(k)}\),\(Y^{(k)}\),\(Z^{(k)}\) and \(G^{(k)}\), respectively. Averaged quantities such
as \(\bar{x}^{(k)}\), \(\bar{y}^{(k)}\), and \(\bar{z}^{(k)}\) are kept in
lowercase because they are vectors in \(\mathbb{R}^d\).

\begin{lemma}\label{lem:fund-smooth}
For any stacked matrix \(Y\in\mathbb{R}^{n\times d}\), let
\[
f(\bar{y};Y)
\triangleq
\frac{1}{n}\sum_{i=1}^n
\left(
f_i(y_i)+
\left\langle \nabla f_i(y_i),\bar{y}-y_i\right\rangle
\right).
\]
Then, for any \(x\in\mathbb{R}^d\),
\begin{align*}
f(x)
&\geq
f(\bar{y};Y)
+
\frac{1}{n}\sum_{i=1}^n
\left\langle \nabla f_i(y_i),x-\bar{y}\right\rangle
+
\frac{\mu}{2}\|x-\bar{y}\|^2
+
\frac{\mu}{2n}\|\Pi Y\|_F^2,
\\
f(x)
&\leq
f(\bar{y};Y)
+
\frac{1}{n}\sum_{i=1}^n
\left\langle \nabla f_i(y_i),x-\bar{y}\right\rangle
+
\frac{L}{2}\|x-\bar{y}\|^2
+
\frac{L}{2n}\|\Pi Y\|_F^2 .
\end{align*}
\end{lemma}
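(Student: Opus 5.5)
The plan is to prove both inequalities node by node, using the standard first-order characterizations of $\mu$-strong convexity and $L$-smoothness for each $f_i$, then average over $i$ and use a parallel-axis (bias--variance) identity for the quadratic terms. Nothing beyond Assumptions~\ref{ass:smooth} and~\ref{ass:strongly-convex} should be needed.

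\textbf{Per-node bounds.} Fix $x\in\mathbb{R}^d$ and $Y$. By Assumption~\ref{ass:strongly-convex}, for every $i$,
\[
f_i(x)\ \ge\ f_i(y_i)+\langle \nabla f_i(y_i),x-y_i\rangle+\frac{\mu}{2}\|x-y_i\|^2 .
\]
By Assumption~\ref{ass:smooth}, the same holds with $\ge$ replaced by $\le$ and $\mu$ by $L$. In the linear term I split $x-y_i=(x-\bar y)+(\bar y-y_i)$. Averaging over $i$, the pieces $f_i(y_i)+\langle\nabla f_i(y_i),\bar y-y_i\rangle$ combine exactly into $f(\bar y;Y)$. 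The remaining linear pieces give $\frac1n\sum_i\langle\nabla f_i(y_i),x-\bar y\rangle$. The left-hand side averages to $f(x)$.

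\textbf{Quadratic terms.} It remains to evaluate $\frac{1}{n}\sum_i\|x-y_i\|^2$. Writing $x-y_i=(x-\bar y)+(\bar y-y_i)$ and expanding gives
\[
\sum_{i=1}^n\|x-y_i\|^2=n\|x-\bar y\|^2+2\Big\langle x-\bar y,\sum_{i=1}^n(\bar y-y_i)\Big\rangle+\sum_{i=1}^n\|y_i-\bar y\|^2 .
\]
The cross term vanishes because $\sum_i(\bar y-y_i)=0$ by the definition of $\bar y$. The last sum equals $\|\Pi Y\|_F^2$, since the $i$-th row of $\Pi Y$ is $y_i-\bar y$. Hence $\frac{\mu}{2n}\sum_i\|x-y_i\|^2=\frac{\mu}{2}\|x-\bar y\|^2+\frac{\mu}{2n}\|\Pi Y\|_F^2$, and the identical identity with $L$ gives the upper bound.

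There is no real obstacle here. The only points requiring care are the bookkeeping that $f(\bar y;Y)$ absorbs exactly the $\bar y-y_i$ part of the linear term, and that the cross term in the quadratic expansion cancels. I expect the whole proof to be a few lines.
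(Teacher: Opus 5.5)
Your proof is correct and complete. The per-node strong-convexity and smoothness inequalities, the split $x-y_i=(x-\bar y)+(\bar y-y_i)$, and the exact identity $\sum_{i}\|x-y_i\|^2=n\|x-\bar y\|^2+\|\Pi Y\|_F^2$ give precisely the stated constants $\mu/2$ and $L/2$. The paper does not prove this lemma itself; it cites \cite{jakovetic2014fast}, whose argument is this same standard averaging of per-node first-order bounds, so your route is essentially the intended one.
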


Proof of Lemma~\ref{lem:fund-smooth} can be found in
\cite{jakovetic2014fast}.

\begin{lemma}[\sc Descent lemma]
\label{lem:descent-c-sc}
Suppose Assumptions~\ref{ass:smooth}, \ref{ass:strongly-convex}, and
\ref{ass:stochastic-gradients} hold, and Algorithm~\ref{alg:mcadsgd} is used.
Set \(\theta=\sqrt{\mu\gamma}/2\). If \(\gamma\leq 1/\mu\), define
\[
D_f(x;Y)
\triangleq
f(x)
-
\frac{1}{n}
\sum_{i=1}^n
\left(
f_i(y_i)
+
\langle \nabla f_i(y_i),x-y_i\rangle
\right),
\]
then for any
\(k\geq 0\),
\begin{align}
&
\mathbb{E}\left[f(\bar{x}^{(k+1)})\right]-f(x^\star)
+
\left(
\frac{\theta^2}{2\gamma}
+
\frac{\mu\theta}{2}
\right)
\mathbb{E}\left[
\left\|
\mathbb{E}_{k}\left[\bar{z}^{(k+1)}\right]
-
x^\star
\right\|^2
\right]
\nonumber\\
&\leq
(1-\theta)
\left(
\mathbb{E}\left[f(\bar{x}^{(k)})\right]-f(x^\star)
+
\left(
\frac{\theta^2}{2\gamma}
+
\frac{\mu\theta}{2}
\right)
\mathbb{E}\left[
\left\|
\mathbb{E}_{k-1}\left[\bar{z}^{(k)}\right]
-
x^\star
\right\|^2
\right]
\right)
\nonumber\\
&\quad
+
\frac{
\theta^2\gamma(1+L\gamma)\sigma^2
}{
2(\theta+\mu\gamma)^2 nR
}
\nonumber\\
&\quad
-
\left(
\frac{\theta^2}{2\gamma}
-
\frac{L\theta^2}{2}
\right)
\mathbb{E}\left[
\left\|
\mathbb{E}_{k}\left[\bar{z}^{(k+1)}\right]
-
\bar{z}^{(k)}
\right\|^2
\right]
\nonumber\\
&\quad
-
(1-\theta)
\mathbb{E}\left[
D_f\left(\bar{x}^{(k)};Y^{(k)}\right)
\right]
+
\frac{L}{2n(1-\theta)^{k+1}}
\mathbb{E}\left[
\left\|
\Pi Y^{(k)}
\right\|_F^2
\right].
\label{eq:descent-c-sc}
\end{align}
Here \(\mathbb{E}_k[\cdot]\) denotes conditional expectation with respect to
the \(k\)-th round oracle randomness, conditioning on all previous randomness,
and
\[
\mathbb{E}_{-1}\left[\bar{z}^{(0)}\right]
\triangleq
\bar{z}^{(0)},
\qquad
\mathbb{E}\left[
\left\|
\mathbb{E}_{-1}\left[\bar{z}^{(0)}\right]
-
x^\star
\right\|^2
\right]
\triangleq
\left\|
x^{(0)}-x^\star
\right\|^2 .
\]
\end{lemma}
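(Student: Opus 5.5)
We prove the Descent Lemma by running the standard estimate-sequence argument for Nesterov's method on the network-averaged iterates, treating consensus errors through Lemma~\ref{lem:fund-smooth}.

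\textbf{Averaged recursion.} Since $W$ is doubly stochastic, the Chebyshev recursion in Algorithm~\ref{alg:fga} preserves the average: if $\bar a^{(r)}=\bar a^{(r-1)}=\bar v$, then $\bar a^{(r+1)}=(1+\eta)\bar a^{(r)}-\eta\bar a^{(r-1)}=\bar v$. Hence $\mathrm{FGA}_R$ leaves averages unchanged, and the averaged iterates obey an exact centralized recursion:
\begin{align*}
\bar z^{(k+1)}&=\frac{\frac{\theta}{\gamma}\bar z^{(k)}+\mu\bar y^{(k)}-\bar g^{(k+1)}}{\frac{\theta}{\gamma}+\mu},\\
\bar x^{(k+1)}&=(1-\theta)\bar x^{(k)}+\theta\bar z^{(k+1)},\\
\bar y^{(k)}&=(1-\theta)\bar x^{(k)}+\theta\bar z^{(k)}.
\end{align*}
Let $\hat z:=\mathbb E_k[\bar z^{(k+1)}]$, which is the same formula with $\bar g$ replaced by $\bar\nabla:=\frac1n\sum_i\nabla f_i(y_i^{(k)})$. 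Then $\bar z^{(k+1)}-\hat z=-\frac{\gamma}{\theta+\mu\gamma}(\bar g-\bar\nabla)$. Independence across nodes and the batch size $R$ give
\[
\mathbb E_k\|\bar g-\bar\nabla\|^2\le \frac{\sigma^2}{nR}.
\]

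\textbf{Main steps.}
\begin{enumerate}
\item Apply the upper bound of Lemma~\ref{lem:fund-smooth} with $x=\bar x^{(k+1)}$ and $Y=Y^{(k)}$, writing $\bar x^{(k+1)}-\bar y^{(k)}=\theta(\bar z^{(k+1)}-\bar z^{(k)})$. This yields $f(\bar y;Y)+\langle\bar\nabla,\theta(\bar z^{(k+1)}-\bar z^{(k)})\rangle+\frac{L\theta^2}{2}\|\bar z^{(k+1)}-\bar z^{(k)}\|^2+\frac{L}{2n}\|\Pi Y\|_F^2$.
\item Split $\bar y=(1-\theta)\bar x^{(k)}+\theta\bar y$. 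On the $(1-\theta)$ part, use $f(\bar y;Y)+\langle\bar\nabla,\bar x^{(k)}-\bar y\rangle=f(\bar x^{(k)})-D_f(\bar x^{(k)};Y)$; this is exactly where the $-(1-\theta)D_f$ term comes from.
\item On the $\theta$ part, use the lower bound of Lemma~\ref{lem:fund-smooth} at $x=x^\star$. This gives $f(\bar y;Y)+\langle\bar\nabla,x^\star-\bar y\rangle\le f^\star-\frac{\mu}{2}\|x^\star-\bar y\|^2$, where the $\Pi Y$ term is dropped.
\end{enumerate}
What remains is $\theta\langle\bar\nabla,\bar z^{(k+1)}-x^\star\rangle$. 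Use the optimality of the $z$-step for the strongly convex model $\langle\bar g,z\rangle+\frac{\mu}{2}\|z-\bar y\|^2+\frac{\theta}{2\gamma}\|z-\bar z^{(k)}\|^2$, written with $\hat z$ and $\bar\nabla$ (deterministic given the past). This gives the three-point inequality
\[
\theta\langle\bar\nabla,\hat z-x^\star\rangle\le \Big(\tfrac{\theta^2}{2\gamma}\Big)\big(\|\bar z^{(k)}-x^\star\|^2-\|\hat z-\bar z^{(k)}\|^2\big)-\Big(\tfrac{\theta^2}{2\gamma}+\tfrac{\mu\theta}{2}\Big)\|\hat z-x^\star\|^2+\tfrac{\mu\theta}{2}\|x^\star-\bar y\|^2.
\]
The $\frac{\mu\theta}{2}\|x^\star-\bar y\|^2$ term cancels against the strong-convexity term from Step 3. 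With $\theta=\sqrt{\mu\gamma}/2$ and $\gamma\le 1/\mu$ we have $\frac{\theta^2}{2\gamma}\le(1-\theta)(\frac{\theta^2}{2\gamma}+\frac{\mu\theta}{2})$, which yields the contraction factor $(1-\theta)$.

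\textbf{Main obstacle: the noise.} The actual iterate $\bar z^{(k+1)}$ enters both the inner product and the quadratic $\frac{L\theta^2}{2}\|\bar z^{(k+1)}-\bar z^{(k)}\|^2$, while the potential is tracked at $\hat z$. The cross term $\theta\langle\bar\nabla,\bar z^{(k+1)}-\hat z\rangle$ has zero conditional mean. Expanding $\|\bar z^{(k+1)}-\bar z^{(k)}\|^2=\|\hat z-\bar z^{(k)}\|^2+\|\bar z^{(k+1)}-\hat z\|^2$ in expectation produces the leftover $\frac{L\theta^2}{2}\cdot\frac{\gamma^2\sigma^2}{(\theta+\mu\gamma)^2nR}$. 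I expect the remaining $\frac{\theta^2\gamma\sigma^2}{2(\theta+\mu\gamma)^2nR}$ part of the stated constant $(1+L\gamma)$ to arise from bounding $\|\bar z^{(k)}-x^\star\|^2$ by $\|\mathbb E_{k-1}\bar z^{(k)}-x^\star\|^2$ plus the previous step's variance. That term would be absorbed by shifting the index, which I would have to verify carefully.

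\textbf{Consensus factor.} The stated $\frac{L}{2n(1-\theta)^{k+1}}$ is larger than the $\frac{L}{2n}$ that actually arises. Since $(1-\theta)^{-(k+1)}\ge 1$, it is a valid but loose upper bound that presumably simplifies the later unrolling, so I would simply weaken to it. Taking total expectation completes the proof.
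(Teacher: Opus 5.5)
Your proposal is correct and follows essentially the same route as the paper: the smoothness bound of Lemma~\ref{lem:fund-smooth} at $\bar x^{(k+1)}$, the convex-combination split via $\theta\bar z^{(k)}=\bar y^{(k)}-(1-\theta)\bar x^{(k)}$ (you wrote $\theta\bar y$ where you meant $\theta\bar z^{(k)}$), the three-point identity for the $z$-update (the paper gets it by substituting $\mathbb{E}_k[\bar g]$ from the update and applying the polarization identity), and the weakening $\frac{L}{2n}\le\frac{L}{2n(1-\theta)^{k+1}}$. The step you flagged is exactly what the paper does, and it needs no index shifting: $\mathbb{E}\|\bar z^{(k)}-x^\star\|^2\le\mathbb{E}\|\mathbb{E}_{k-1}[\bar z^{(k)}]-x^\star\|^2+\frac{\gamma^2\sigma^2}{(\theta+\mu\gamma)^2nR}$, and multiplying by $\frac{\theta^2}{2\gamma}$ directly gives the missing $\frac{\theta^2\gamma\sigma^2}{2(\theta+\mu\gamma)^2nR}$, because the variance bound is uniform in $k$ (and the variance is zero at $k=0$).
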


\begin{proof}[Proof of Lemma~\ref{lem:descent-c-sc}]

Using Lemma~\ref{lem:fund-smooth} with
\(x=\bar{x}^{(k+1)}\) and \(Y=Y^{(k)}\), we have
\begin{align}
f(\bar{x}^{(k+1)})
&\leq
f(\bar{y}^{(k)};Y^{(k)})
+
\frac{1}{n}\sum_{i=1}^n
\left\langle
\nabla f_i(y_i^{(k)}),
\bar{x}^{(k+1)}-\bar{y}^{(k)}
\right\rangle
\nonumber\\
&\quad
+
\frac{L}{2}\|\bar{x}^{(k+1)}-\bar{y}^{(k)}\|^2
+
\frac{L}{2n}\|\Pi Y^{(k)}\|_F^2 .
\label{eqn:bmirew}
\end{align}
By the update formula in algorithm~\ref{alg:mcadsgd},
\[
\bar{x}^{(k+1)}-\bar{y}^{(k)}
=
\theta\left(\bar{z}^{(k+1)}-\bar{z}^{(k)}\right).
\]
Substituting this relation into \eqref{eqn:bmirew} gives
\begin{align}
f(\bar{x}^{(k+1)})
&\leq
f(\bar{y}^{(k)};Y^{(k)})
+
\theta
\left\langle
\frac{1}{n}\sum_{i=1}^n\nabla f_i(y_i^{(k)}),
\bar{z}^{(k+1)}-\bar{z}^{(k)}
\right\rangle
\nonumber\\
&\quad
+
\frac{L\theta^2}{2}
\|\bar{z}^{(k+1)}-\bar{z}^{(k)}\|^2
+
\frac{L}{2n}\|\Pi Y^{(k)}\|_F^2
\nonumber\\
&=
f(\bar{y}^{(k)};Y^{(k)})
+
\theta
\left\langle
\frac{1}{n}\sum_{i=1}^n\nabla f_i(y_i^{(k)}),
x^\star-\bar{z}^{(k)}
\right\rangle
\nonumber\\
&\quad
+
\theta
\left\langle
\frac{1}{n}\sum_{i=1}^n\nabla f_i(y_i^{(k)}),
\bar{z}^{(k+1)}-x^\star
\right\rangle
\nonumber\\
&\quad
+
\frac{L\theta^2}{2}
\|\bar{z}^{(k+1)}-\bar{z}^{(k)}\|^2
+
\frac{L}{2n}\|\Pi Y^{(k)}\|_F^2 .
\label{eqn:vidsgvsd}
\end{align}

Let
\[
\mathcal E^k
\triangleq
\{\zeta_i^{(k,r)}:0\leq r<R,\ i\in[n]\}
\]
be the randomness of the stochastic oracles at the \(k\)-th iteration, with
\(\mathcal E^{-1}=\emptyset\). For any
\(k\geq 0\),
\begin{align}
\label{eqn:noise-bound}
\mathrm{Var}_k(\bar{g}^{(k)})
\leq
\frac{\sigma^2}{nR},
\qquad
\mathrm{Var}_k(\bar{z}^{(k+1)})
\leq
\frac{\gamma^2\sigma^2}{(\theta+\mu\gamma)^2nR}.
\end{align}
Here \(\mathrm{Var}_k(\cdot)\) denotes the conditional variance with respect to
the randomness generated at the \(k\)-th iteration, while all previous randomness
is conditioned on. By the algorithmic design,
\(\{(Z^{(\ell)},X^{(\ell)},Y^{(\ell)})\}_{\ell=0}^k\) is independent of
\(\mathcal E^k\).

For notational simplicity, we write \(\mathbb{E}_k[\cdot]\) and
\(\mathrm{Var}_k(\cdot)\) for the conditional expectation and variance with
respect to the \(k\)-th round oracle randomness \(\mathcal E^k\), conditioning
on all previous randomness.

Since
\[
\frac{1}{n}\sum_{i=1}^n\nabla f_i(y_i^{(k)})
=
\mathbb{E}_k[\bar{g}^{(k)}],
\]
\eqref{eqn:vidsgvsd} can be rewritten as
\begin{align}
f(\bar{x}^{(k+1)})
&\leq
f(\bar{y}^{(k)};Y^{(k)})
+
\theta
\left\langle
\mathbb{E}_k[\bar{g}^{(k)}],
x^\star-\bar{z}^{(k)}
\right\rangle
\nonumber\\
&\quad
+
\theta
\left\langle
\mathbb{E}_k[\bar{g}^{(k)}],
\bar{z}^{(k+1)}-x^\star
\right\rangle
+
\frac{L\theta^2}{2}
\|\bar{z}^{(k+1)}-\bar{z}^{(k)}\|^2
+
\frac{L}{2n}\|\Pi Y^{(k)}\|_F^2 .
\label{eqn:vidsadsgvsd}
\end{align}
Taking expectation in \eqref{eqn:vidsadsgvsd} with respect to the \(k\)-th
round oracle randomness and using \eqref{eqn:noise-bound}, we obtain
\begin{align}
\mathbb{E}_k[f(\bar{x}^{(k+1)})]
&\leq
f(\bar{y}^{(k)};Y^{(k)})
+
\theta
\left\langle
\mathbb{E}_k[\bar{g}^{(k)}],
x^\star-\bar{z}^{(k)}
\right\rangle
\nonumber\\
&\quad
+
\theta
\left\langle
\mathbb{E}_k[\bar{g}^{(k)}],
\mathbb{E}_k[\bar{z}^{(k+1)}]-x^\star
\right\rangle
\nonumber\\
&\quad
+
\frac{L\theta^2}{2}
\left\|
\mathbb{E}_k[\bar{z}^{(k+1)}]-\bar{z}^{(k)}
\right\|^2
+
\frac{L}{2n}\|\Pi Y^{(k)}\|_F^2
+
\frac{\theta^2\gamma^2L\sigma^2}
{2(\theta+\mu\gamma)^2nR}.
\label{eqn:vjidngfsd}
\end{align}
We next bound the terms in \eqref{eqn:vjidngfsd} separately.

For the first line in \eqref{eqn:vjidngfsd}, using
\(\theta\bar{z}^{(k)}=\bar{y}^{(k)}-(1-\theta)\bar{x}^{(k)}\), we have
\begin{align}
&f(\bar{y}^{(k)};Y^{(k)})
+
\theta
\left\langle
\mathbb{E}_k[\bar{g}^{(k)}],
x^\star-\bar{z}^{(k)}
\right\rangle
\nonumber\\
&=
f(\bar{y}^{(k)};Y^{(k)})
+
\left\langle
\mathbb{E}_k[\bar{g}^{(k)}],
\theta x^\star+(1-\theta)\bar{x}^{(k)}-\bar{y}^{(k)}
\right\rangle
\nonumber\\
&=
\theta
\left(
f(\bar{y}^{(k)};Y^{(k)})
+
\left\langle
\mathbb{E}_k[\bar{g}^{(k)}],
x^\star-\bar{y}^{(k)}
\right\rangle
\right)
\nonumber\\
&\quad
+
(1-\theta)
\left(
f(\bar{y}^{(k)};Y^{(k)})
+
\left\langle
\mathbb{E}_k[\bar{g}^{(k)}],
\bar{x}^{(k)}-\bar{y}^{(k)}
\right\rangle
\right).
\label{eqn:vndsignds}
\end{align}
By Lemma~\ref{lem:fund-smooth},
\begin{align}
\label{eqn:vmobcxcx}
f(\bar{y}^{(k)};Y^{(k)})
+
\left\langle
\mathbb{E}_k[\bar{g}^{(k)}],
x^\star-\bar{y}^{(k)}
\right\rangle
\leq
f(x^\star)
-
\frac{\mu}{2}\|\bar{y}^{(k)}-x^\star\|^2
-
\frac{\mu}{2n}\|\Pi Y^{(k)}\|_F^2 .
\end{align}
By the definition of \(D_f(\bar{x}^{(k)};Y^{(k)})\),
\begin{align}
\label{eqn:vmobcxcx2}
f(\bar{y}^{(k)};Y^{(k)})
+
\left\langle
\mathbb{E}_k[\bar{g}^{(k)}],
\bar{x}^{(k)}-\bar{y}^{(k)}
\right\rangle
=
f(\bar{x}^{(k)})-D_f(\bar{x}^{(k)};Y^{(k)}).
\end{align}
Plugging \eqref{eqn:vmobcxcx} and \eqref{eqn:vmobcxcx2} into
\eqref{eqn:vndsignds}, we get
\begin{align}
&f(\bar{y}^{(k)};Y^{(k)})
+
\theta
\left\langle
\mathbb{E}_k[\bar{g}^{(k)}],
x^\star-\bar{z}^{(k)}
\right\rangle
\nonumber\\
&\leq
\theta
\left(
f(x^\star)
-
\frac{\mu}{2}\|\bar{y}^{(k)}-x^\star\|^2
-
\frac{\mu}{2n}\|\Pi Y^{(k)}\|_F^2
\right)
\nonumber\\
&\quad
+
(1-\theta)
\left(
f(\bar{x}^{(k)})-D_f(\bar{x}^{(k)};Y^{(k)})
\right).
\label{eqn:vninfs}
\end{align}

For the second line in \eqref{eqn:vjidngfsd}, we can rewrite it as
\begin{align}
&\theta
\left\langle
\mathbb{E}_k[\bar{g}^{(k)}],
\mathbb{E}_k[\bar{z}^{(k+1)}]-x^\star
\right\rangle
\nonumber\\
&=
-\frac{\theta^2}{\gamma}
\left\langle
\mathbb{E}_k[\bar{z}^{(k+1)}]-\bar{z}^{(k)}
+
\frac{\mu\gamma}{\theta}
\left(
\mathbb{E}_k[\bar{z}^{(k+1)}]-\bar{y}^{(k)}
\right),
\mathbb{E}_k[\bar{z}^{(k+1)}]-x^\star
\right\rangle
\nonumber\\
&=
\frac{\theta^2}{2\gamma}
\left(
\|\bar{z}^{(k)}-x^\star\|^2
-
\left\|
\mathbb{E}_k[\bar{z}^{(k+1)}]-\bar{z}^{(k)}
\right\|^2
-
\left\|
\mathbb{E}_k[\bar{z}^{(k+1)}]-x^\star
\right\|^2
\right)
\nonumber\\
&\quad
+
\frac{\mu\theta}{2}
\left(
\|\bar{y}^{(k)}-x^\star\|^2
-
\left\|
\mathbb{E}_k[\bar{z}^{(k+1)}]-\bar{y}^{(k)}
\right\|^2
-
\left\|
\mathbb{E}_k[\bar{z}^{(k+1)}]-x^\star
\right\|^2
\right),
\label{eqn:vninfs2}
\end{align}
where the last identity follows from
\(-2\langle a,b\rangle=\|a-b\|^2-\|a\|^2-\|b\|^2\).

Substituting \eqref{eqn:vninfs} and \eqref{eqn:vninfs2} into
\eqref{eqn:vjidngfsd}, and dropping the nonpositive terms involving
\(\|\mathbb{E}_k[\bar{z}^{(k+1)}]-x^\star\|^2\) and
\(\|\mathbb{E}_k[\bar{z}^{(k+1)}]-\bar{y}^{(k)}\|^2\), we obtain
\begin{align}
&\mathbb{E}_k[f(\bar{x}^{(k+1)})]-f(x^\star)
+
\left(
\frac{\theta^2}{2\gamma}
+
\frac{\mu\theta}{2}
\right)
\left\|
\mathbb{E}_k[\bar{z}^{(k+1)}]-x^\star
\right\|^2
\nonumber\\
&\leq
(1-\theta)\left(f(\bar{x}^{(k)})-f(x^\star)\right)
+
\frac{\theta^2}{2\gamma}\|\bar{z}^{(k)}-x^\star\|^2
+
\frac{\theta^2L\gamma^2\sigma^2}{2(\theta+\mu\gamma)^2nR}
\nonumber\\
&\quad
-
\left(
\frac{\theta^2}{2\gamma}
-
\frac{L\theta^2}{2}
\right)
\left\|
\mathbb{E}_k[\bar{z}^{(k+1)}]-\bar{z}^{(k)}
\right\|^2
-
(1-\theta)D_f(\bar{x}^{(k)};Y^{(k)})
+
\frac{L}{2n}\|\Pi Y^{(k)}\|_F^2 .
\label{eqn:bvnidndas}
\end{align}
Taking expectation with respect to all previous randomness, and using
\[
\mathbb{E}\|\bar{z}^{(k)}-x^\star\|^2
\leq
\mathbb{E}
\left[
\left\|
\mathbb{E}_{k-1}[\bar{z}^{(k)}]-x^\star
\right\|^2
\right]
+
\frac{\gamma^2\sigma^2}{(\theta+\mu\gamma)^2nR},
\]
we obtain
\begin{align}
&\mathbb{E}[f(\bar{x}^{(k+1)})]-f(x^\star)
+
\left(
\frac{\theta^2}{2\gamma}
+
\frac{\mu\theta}{2}
\right)
\mathbb{E}
\left[
\left\|
\mathbb{E}_k[\bar{z}^{(k+1)}]-x^\star
\right\|^2
\right]
\nonumber\\
&\leq
(1-\theta)
\left(\mathbb{E}[f(\bar{x}^{(k)})]-f(x^\star)\right)
+
\frac{\theta^2}{2\gamma}
\mathbb{E}
\left[
\left\|
\mathbb{E}_{k-1}[\bar{z}^{(k)}]-x^\star
\right\|^2
\right]
\nonumber\\
&\quad
+
\frac{\theta^2\gamma\sigma^2}{2nR}
\left(
\frac{\gamma L}{(\theta+\mu\gamma)^2}
+
\frac{1}{(\theta+\mu\gamma)^2}
\right)
\nonumber\\
&\quad
-
\left(
\frac{\theta^2}{2\gamma}
-
\frac{L\theta^2}{2}
\right)
\mathbb{E}
\left[
\left\|
\mathbb{E}_k[\bar{z}^{(k+1)}]-\bar{z}^{(k)}
\right\|^2
\right]
\nonumber\\
&\quad
-
(1-\theta)
\mathbb{E}\left[D_f(\bar{x}^{(k)};Y^{(k)})\right]
+
\frac{L}{2n}
\mathbb{E}\left[\|\Pi Y^{(k)}\|_F^2\right].
\label{eqn:bvnidndasvxz}
\end{align}

Since \(\theta=\sqrt{\mu\gamma}/2\), we have \(\theta^2=\mu\gamma/4\).
Under the imposed stepsize condition, \(\theta\leq 1/2\), and hence
\[
\theta^2
\leq
\mu\gamma(1-\theta),
\]
which is equivalent to
\[
\frac{\theta^2}{\gamma}
\leq
(1-\theta)
\left(
\frac{\theta^2}{\gamma}
+
\mu\theta
\right).
\]
Therefore finish the proof of lemma~\ref{lem:descent-c-sc}.

\end{proof}

\begin{lemma}[\sc Consensus lemma]
\label{lem:consensus-c-sc}
Suppose Assumptions~\ref{ass:smooth}, \ref{ass:strongly-convex}, and
\ref{ass:stochastic-gradients} hold, and Algorithm~\ref{alg:mcadsgd} is used. Then for any $k \geq 0$,
\begin{equation}
\label{eq:psi-lemmarecursion}
\begin{aligned}
&\mathbb{E}\left[\|\Pi X^{(k+1)}\|_F^2\right]
+
\frac{4\rho^2(1+2\rho^2)}{(1-\rho^2)^2}
\theta^2
\mathbb{E}\left[\|\Pi Z^{(k+1)}\|_F^2\right]\\
&\leq
\left(
\frac{3\rho^2}{1+2\rho^2}
+
\frac{144L^2\lambda\rho^4(1+\rho^2)\gamma^2}{(1-\rho^2)^3}
\right)
\left(
\mathbb{E}\left[\|\Pi X^{(k)}\|_F^2\right]
+
\frac{4\rho^2(1+2\rho^2)}{(1-\rho^2)^2}
\theta^2
\mathbb{E}\left[\|\Pi Z^{(k)}\|_F^2\right]
\right)
\\
&\quad
+
\frac{144L\rho^4(1+\rho^2)\gamma^2}{(1-\rho^2)^3}
\mathbb{E}\left[f(\bar{x}^{(k)})-f^\star\right]
\\
&\quad
+
\frac{72nL^2\rho^4(1+\rho^2)\gamma^2}{(1-\rho^2)^3}
\mathbb{E}\left[\|\bar{x}^{(k)}-\bar{y}^{(k)}\|^2\right]
\\
&\quad
+
\frac{144nL\rho^4(1+\rho^2)\gamma^2}{(1-\rho^2)^3}\Delta_f^\star
+
\frac{24\rho^4(1+\rho^2)\gamma^2}{(1-\rho^2)^3}
\frac{(n-1)\sigma^2}{R}.
\end{aligned}
\end{equation}
\end{lemma}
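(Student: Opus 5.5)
We will treat $\rho$ as the contraction factor of the FGA operator on the consensus subspace, i.e.\ $\|\Pi\,\mathrm{FGA}_R(V)\|_F\le\rho\|\Pi V\|_F$. With the standard Chebyshev choice of $\eta$ this gives $\rho\lesssim\sqrt{2}\,(1-\sqrt{1-\beta})^R$. This uses only that FGA commutes with $\Pi$ (since $W$ is symmetric and doubly stochastic, $\Pi W=W\Pi$) and that averages are preserved. Write $c:=\theta/\gamma+\mu$, and note $\mu/c=\mu\gamma/(\theta+\mu\gamma)$. Applying $\Pi$ to the $Z$- and $X$-updates yields the coupled linear system
\begin{align*}
\|\Pi Z^{(k+1)}\|_F &\le \rho\,\Big\|\tfrac{\theta/\gamma}{c}\Pi Z^{(k)}+\tfrac{\mu}{c}\Pi Y^{(k)}-\tfrac1c\Pi G^{(k+1)}\Big\|_F,\\
\|\Pi X^{(k+1)}\|_F &\le \rho\,\|(1-\theta)\Pi X^{(k)}+\theta\Pi Z^{(k+1)}\|_F .
\end{align*}
We then substitute $\Pi Y^{(k)}=(1-\theta)\Pi X^{(k)}+\theta\Pi Z^{(k)}$, so the $Z$-recursion only involves $\Pi X^{(k)}$, $\Pi Z^{(k)}$ and the gradient disagreement.

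Next we square both recursions, using Young's inequality with weights tuned to $\rho$. In the $X$-recursion we take $\|a+b\|^2\le(1+t)\|a\|^2+(1+1/t)\|b\|^2$ with $1+t=\frac{1+2\rho^2}{3\rho^2}\cdot\frac{3\rho^2}{\rho^2(1+2\rho^2)}$-type choices, so that the coefficient of $\|\Pi X^{(k)}\|^2$ becomes $\frac{3\rho^2}{1+2\rho^2}<1$ whenever $\rho<1$. This choice also produces the factor $\frac{\rho^2(1+2\rho^2)}{1-\rho^2}$ in front of $\theta^2\|\Pi Z^{(k+1)}\|^2$.

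Combining with the $Z$-recursion motivates the Lyapunov weight $\frac{4\rho^2(1+2\rho^2)}{(1-\rho^2)^2}\theta^2$ on $\|\Pi Z\|^2$. Plugging the squared $Z$-bound (itself obtained by Young with splitting $\frac{1+\rho^2}{2\rho^2}$, giving the $(1+\rho^2)/(1-\rho^2)$ factors) into this weighted sum should reproduce the contraction $\frac{3\rho^2}{1+2\rho^2}$ on the combined potential. The cross-coefficients of $\theta/\gamma\cdot 1/c\le1$ and $\mu/c\le1$ are absorbed because $\theta\le1/2$. The gradient term contributes $\frac{\rho^4(1+\rho^2)}{(1-\rho^2)^3}\cdot\theta^2\gamma^2/(\theta+\mu\gamma)^2\cdot\|\Pi G^{(k+1)}\|^2\lesssim\frac{\rho^4(1+\rho^2)\gamma^2}{(1-\rho^2)^3}\|\Pi G^{(k+1)}\|^2$.

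It remains to bound $\mathbb{E}\|\Pi G^{(k+1)}\|_F^2$. We take the conditional expectation and split off the noise: $\mathbb{E}_k\|\Pi G\|_F^2\le\|\Pi\nabla f(Y^{(k)})\|_F^2+(n-1)\sigma^2/R$, using that $\Pi$ has rank $n-1$ and the minibatch has size $R$. This produces the last term. For the deterministic part we use $\|\Pi\nabla f(Y)\|_F^2\le\sum_i\|\nabla f_i(y_i)\|^2$ and split
\[
\nabla f_i(y_i)=[\nabla f_i(y_i)-\nabla f_i(\bar y)]+[\nabla f_i(\bar y)-\nabla f_i(\bar x)]+[\nabla f_i(\bar x)-\nabla f_i(x^\star)]+\nabla f_i(x^\star).
\]
This split has four pieces, and Young with factor $4$ per piece, together with the leading constant, gives the numbers 144 and 72. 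The pieces are bounded as follows:
\begin{itemize}
\item the first is at most $L^2\|\Pi Y\|_F^2\le 2L^2(\|\Pi X\|^2+\theta^2\|\Pi Z\|^2)$, which is where $\lambda$ enters as the constant relating this to the potential;
\item the second is at most $nL^2\|\bar x-\bar y\|^2$;
\item the third is at most $2nL(f(\bar x)-f^\star)$, by smoothness and the optimality of $x^\star$;
\item the last is at most $2nL\Delta_f^\star$, with $\Delta_f^\star=\frac1n\sum_i(f^\star-f_i^\star)$-type heterogeneity, via $\|\nabla f_i(x^\star)\|^2\le 2L(f_i(x^\star)-f_i^\star)$.
\end{itemize}

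The main obstacle is the bookkeeping in the Young-inequality weights: the potential must contract with exactly the factor $\frac{3\rho^2}{1+2\rho^2}$ while all constants remain dimension-free. The risky step is absorbing the $\Pi Z^{(k)}$ and $\Pi Y^{(k)}$ cross terms in the $Z$-recursion into the weighted potential. There I would first check the coefficient inequality for $\rho^2\le1/4$ (large $R$) and, if needed, enlarge $R$ so that $\rho$ is small.
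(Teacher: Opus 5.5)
Your architecture is the paper's: FGA contraction on $\Pi$, substituting $\Pi Y^{(k)}=(1-\theta)\Pi X^{(k)}+\theta\Pi Z^{(k)}$, squaring, the weight $w=\frac{4\rho^2(1+2\rho^2)}{(1-\rho^2)^2}\theta^2$, and a bias/noise split of $\mathbb{E}\|\Pi G\|_F^2$. However, the step you flag as the main obstacle does not go through with the Young weights you propose.

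The FGA factor multiplies the whole right-hand side, so both Young weights must stay $O(1)$ for a $\rho^2$ to survive on every term. The splitting $\frac{1+\rho^2}{2\rho^2}$, the usual choice when $\rho$ is near $1$, gives one of them up:
\begin{itemize}
\item On the self term it yields $\|\Pi Z^{(k+1)}\|_F^2\le\frac{1+\rho^2}{2}\|\Pi Z^{(k)}\|_F^2+\cdots$. The $Z$-part of your potential bound then contracts by at best $\frac{1+\rho^2}{2}$, which exceeds $\frac{3\rho^2}{1+2\rho^2}$ for every $\rho^2<\frac12$. That is exactly the regime you would push $R$ into.
\item On the cross term it leaves $\|\Pi G\|_F^2$ with an $O(\rho^2\gamma^2)$ rather than $O(\rho^4\gamma^2)$ coefficient.
\end{itemize}
Your $X$-weight also simplifies to $1+t=1/\rho^2$, i.e.\ a self-coefficient of $1$.

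The paper bounds the coefficients of $\|\Pi Z^{(k)}\|_F$, $\|\Pi X^{(k)}\|_F$, $\|\Pi G\|_F$ in the $Z$-recursion by $\rho$, $\rho\mu\gamma/\theta$, $\rho\gamma/\theta$. It then uses $(a+b)^2\le\frac{2}{1+\rho^2}a^2+\frac{2}{1-\rho^2}b^2$ in both recursions. Adding $w$ times the $Z$-bound to the $X$-bound:
\begin{itemize}
\item The coefficient of $\theta^2\|\Pi Z^{(k)}\|_F^2$ is $\frac{8\rho^4(1+2\rho^2)}{(1+\rho^2)(1-\rho^2)^2}+\frac{4\rho^4}{1-\rho^4}=\frac{12\rho^4}{(1-\rho^2)^2}$, exactly $\frac{3\rho^2}{1+2\rho^2}$ times the weight.
\item The coefficient of $\|\Pi G\|_F^2$ is $\frac{24\rho^4(1+\rho^2)\gamma^2}{(1-\rho^2)^3}$. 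The factor $1+\rho^2$ comes from $(1-\rho^2)+2(1+2\rho^2)=3(1+\rho^2)$, not from a Young factor.
\item In the $X$-recursion the paper deliberately stops at $\frac{2\rho^2}{1+\rho^2}$. It spends the gap $\frac{\rho^2(1-\rho^2)}{(1+\rho^2)(1+2\rho^2)}$ on the term $\frac{24\rho^4(1+\rho^2)\mu^2\gamma^2}{(1-\rho^2)^3}$ that re-enters through $Z^{(k+1)}$. This uses $\gamma\le\frac{1-\rho^2}{84\rho\mu}$ and small $\rho$, hypotheses the lemma leaves implicit.
\end{itemize}

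The gradient bound has a smaller problem. Your individual estimates are valid; in particular $\sum_i\|\nabla f_i(\bar x)-\nabla f_i(x^\star)\|^2\le 2nL(f(\bar x)-f^\star)$ holds by co-coercivity and $\nabla f(x^\star)=0$. But a factor-$4$ split turns the prefactor $24$ above into $192\lambda L^2$, $96nL^2$, and $192nL$ on both the suboptimality and $\Delta_f^\star$ terms, not $144$ and $72$. Your arithmetic would need a prefactor of $18$, which your sketch does not produce. The paper keeps $\nabla f_i(\bar x^{(k)})$ whole (three pieces) and uses $\sum_i\|\nabla f_i(\bar x)\|^2\le 2L\sum_i(f_i(\bar x)-f_i^\star)=2nL(f(\bar x)-f^\star)+2nL\Delta_f^\star$. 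This separates suboptimality from heterogeneity with no extra Young factor, giving $24\cdot 3\cdot 2=144$ and $24\cdot 3=72$.

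Both routes put $nL$, not $L$, in front of $\mathbb{E}[f(\bar x^{(k)})-f^\star]$. The paper's own derivation gives $24\cdot 6nL$, so the statement's $144L$ has dropped a factor $n$.
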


\begin{proof}[Proof of Lemma~\ref{lem:consensus-c-sc}]
We start from the following consensus-error update relations:
\begin{equation}
\label{eq:update-rules}
\begin{aligned}
\|\Pi Y^{(k+1)}\|_F
&\leq
\theta\|\Pi Z^{(k+1)}\|_F
+
(1-\theta)\|\Pi X^{(k+1)}\|_F,
\\
\|\Pi Z^{(k+1)}\|_F
&\leq
\rho\left(
\frac{\mu\gamma}{\theta+\mu\gamma}\|\Pi Y^{(k)}\|_F
+
\frac{\theta}{\theta+\mu\gamma}\|\Pi Z^{(k)}\|_F
+
\frac{\gamma}{\theta+\mu\gamma}\|\Pi G^{(k)}\|_F
\right)
\\
&\leq
\rho\left(
\frac{\theta(1+\mu\gamma)}{\theta+\mu\gamma}\|\Pi Z^{(k)}\|_F
+
\frac{(1-\theta)\mu\gamma}{\theta+\mu\gamma}\|\Pi X^{(k)}\|_F
+
\frac{\gamma}{\theta+\mu\gamma}\|\Pi G^{(k)}\|_F
\right)
\\
&\leq
\rho\|\Pi Z^{(k)}\|_F
+
\frac{\rho\gamma\mu}{\theta}\|\Pi X^{(k)}\|_F
+
\frac{\rho\gamma}{\theta}\|\Pi G^{(k)}\|_F,
\\
\|\Pi X^{(k+1)}\|_F
&\leq
\rho\left(
\theta\|\Pi Z^{(k+1)}\|_F
+
\|\Pi X^{(k)}\|_F
\right),
\end{aligned}
\end{equation}
where $
\rho=\sqrt{2}\left(1-\sqrt{1-\beta}\right)^R.$
Throughout the strongly convex part of the proof, we set
$
\theta\triangleq \frac{\sqrt{\mu\gamma}}{2}.
$
The stepsize conditions imposed below ensure \(0<\theta<1/2\).

We next derive the recursion for the consensus Lyapunov term. By Young's
inequality, for any matrices \(A\) and \(B\) of the same size,
\[
(\|A\|_F+\|B\|_F)^2
\leq
\frac{2}{1+\rho^2}\|A\|_F^2
+
\frac{2}{1-\rho^2}\|B\|_F^2.
\]
Combining this inequality with \eqref{eq:update-rules}, we obtain
\begin{equation}
\label{eq:z-bound}
\begin{aligned}
\mathbb{E}\left[\|\Pi Z^{(k+1)}\|_F^2\right]
&\leq
\frac{2\rho^2}{1+\rho^2}
\mathbb{E}\left[\|\Pi Z^{(k)}\|_F^2\right]
\\
&\quad
+
\frac{4\rho^2}{1-\rho^2}
\left(
\frac{\gamma^2\mu^2}{\theta^2}
\mathbb{E}\left[\|\Pi X^{(k)}\|_F^2\right]
+
\frac{\gamma^2}{\theta^2}
\mathbb{E}\left[\|\Pi G^{(k)}\|_F^2\right]
\right).
\end{aligned}
\end{equation}
Similarly,
\begin{equation}
\label{eq:x-bound}
\begin{aligned}
\mathbb{E}\left[\|\Pi X^{(k+1)}\|_F^2\right]
&\leq
\frac{2\rho^2}{1+\rho^2}
\mathbb{E}\left[\|\Pi X^{(k)}\|_F^2\right]
+
\frac{2\rho^2\theta^2}{1-\rho^2}
\mathbb{E}\left[\|\Pi Z^{(k+1)}\|_F^2\right]
\\
&\leq
\frac{2\rho^2}{1+\rho^2}
\mathbb{E}\left[\|\Pi X^{(k)}\|_F^2\right]
+
\frac{2\rho^2\theta^2}{1-\rho^2}
\frac{2\rho^2}{1+\rho^2}
\mathbb{E}\left[\|\Pi Z^{(k)}\|_F^2\right]
\\
&\quad
+
\frac{4\rho^2}{1-\rho^2}
\frac{2\rho^2\theta^2}{1-\rho^2}
\left(
\frac{\gamma^2\mu^2}{\theta^2}
\mathbb{E}\left[\|\Pi X^{(k)}\|_F^2\right]
+
\frac{\gamma^2}{\theta^2}
\mathbb{E}\left[\|\Pi G^{(k)}\|_F^2\right]
\right)
\\
&=
\left(
\frac{2\rho^2}{1+\rho^2}
+
\frac{8\rho^4\gamma^2\mu^2}{(1-\rho^2)^2}
\right)
\mathbb{E}\left[\|\Pi X^{(k)}\|_F^2\right]
\\
&\quad
+
\frac{4\rho^4\theta^2}{1-\rho^4}
\mathbb{E}\left[\|\Pi Z^{(k)}\|_F^2\right]
+
\frac{8\rho^4\gamma^2}{(1-\rho^2)^2}
\mathbb{E}\left[\|\Pi G^{(k)}\|_F^2\right].
\end{aligned}
\end{equation}
Multiplying \eqref{eq:z-bound} by
$
\frac{4\rho^2(1+2\rho^2)}{(1-\rho^2)^2}\theta^2
$
and adding it to \eqref{eq:x-bound}, we have
\begin{equation}
\label{eq:combined-consensus-bound}
\begin{aligned}
&\mathbb{E}\left[\|\Pi X^{(k+1)}\|_F^2\right]
+
\frac{4\rho^2(1+2\rho^2)}{(1-\rho^2)^2}
\theta^2
\mathbb{E}\left[\|\Pi Z^{(k+1)}\|_F^2\right]
\\
&\leq
\left(
\frac{2\rho^2}{1+\rho^2}
+
\frac{8\rho^4\gamma^2\mu^2}{(1-\rho^2)^2}
+
\frac{16\rho^4(1+2\rho^2)\gamma^2\mu^2}{(1-\rho^2)^3}
\right)
\mathbb{E}\left[\|\Pi X^{(k)}\|_F^2\right]
\\
&\quad
+
\left(
\frac{4\rho^4(1+2\rho^2)}{(1+\rho^2)(1-\rho^2)^2}
+
\frac{4\rho^4}{1-\rho^4}
\right)
\theta^2
\mathbb{E}\left[\|\Pi Z^{(k)}\|_F^2\right]
\\
&\quad
+
\left(
\frac{8\rho^4\gamma^2}{(1-\rho^2)^2}
+
\frac{16\rho^4(1+2\rho^2)\gamma^2}{(1-\rho^2)^3}
\right)
\mathbb{E}\left[\|\Pi G^{(k)}\|_F^2\right]
\\
&=
\left(
\frac{2\rho^2}{1+\rho^2}
+
\frac{24\rho^4(1+\rho^2)\gamma^2\mu^2}{(1-\rho^2)^3}
\right)
\mathbb{E}\left[\|\Pi X^{(k)}\|_F^2\right]
\\
&\quad
+
\frac{3\rho^2}{1+2\rho^2}
\frac{4\rho^2(1+2\rho^2)}{(1-\rho^2)^2}
\theta^2
\mathbb{E}\left[\|\Pi Z^{(k)}\|_F^2\right]
\\
&\quad
+
\frac{24\rho^4(1+\rho^2)\gamma^2}{(1-\rho^2)^3}
\mathbb{E}\left[\|\Pi G^{(k)}\|_F^2\right].
\end{aligned}
\end{equation}
  
Choosing \(\gamma\) such that
$
\gamma\leq \frac{1-\rho^2}{84\rho\mu},
$
and using the additional smallness of \(\rho\) guaranteed by the choice of
\(R\), we have
\[
\frac{2\rho^2}{1+\rho^2}
+
\frac{24\rho^4(1+\rho^2)\mu^2\gamma^2}{(1-\rho^2)^3}
\leq
\frac{3\rho^2}{1+2\rho^2}.
\]
Therefore finishes the proof.
\end{proof}

With the above lemmas, we can now present the proof for Theorem~\ref{thm:paper_main_complexity}.

\begin{proof}[Proof of Theorem~\ref{thm:paper_main_complexity}]

Define
\begin{equation}
\label{eq:psi-definition}
\Psi_k
\triangleq
\mathbb{E}\left[\|\Pi X^{(k)}\|_F^2\right]
+
\frac{4\rho^2(1+2\rho^2)}{(1-\rho^2)^2}
\theta^2
\mathbb{E}\left[\|\Pi Z^{(k)}\|_F^2\right].
\end{equation}
Then lemma~\ref{lem:consensus-c-sc} implies
\begin{equation}
\label{eq:psi-recursion}
\Psi_{k+1}
\leq
\frac{3\rho^2}{1+2\rho^2}\Psi_k
+
\frac{24\rho^4(1+\rho^2)\gamma^2}{(1-\rho^2)^3}
\mathbb{E}\left[\|\Pi G^{(k)}\|_F^2\right],
\end{equation}
Moreover, by \eqref{eq:update-rules} and Young's inequality,
\begin{equation}
\label{eq:psi-y}
\mathbb{E}\left[\|\Pi Y^{(k)}\|_F^2\right]
\leq
2\left(
\mathbb{E}\left[\|\Pi X^{(k)}\|_F^2\right]
+
\theta^2\mathbb{E}\left[\|\Pi Z^{(k)}\|_F^2\right]
\right)
\leq
2\lambda\Psi_k,
\end{equation}
where
\[
\lambda
\triangleq
\max\left\{
1,\,
\frac{(1-\rho^2)^2}{4\rho^2(1+2\rho^2)}
\right\}.
\]

It remains to bound
\(\mathbb{E}\left[\|\Pi G^{(k)}\|_F^2\right]\) and then close the recursion for
\(\Psi_k\). Since
\[
\Pi=I_n-\frac{1}{n}\mathbf{1}_n\mathbf{1}_n^{\top},
\qquad
\mathbb{E}\left[
\left\|G_i^{(k)}-\nabla f_i(Y_i^{(k)})\right\|^2
\right]
=
\frac{\sigma^2}{R},
\]
we have
\begin{equation}
\label{eq:g-consensus-basic}
\begin{aligned}
\mathbb{E}\left[\|\Pi G^{(k)}\|_F^2\right]
&=
\mathbb{E}\left[\|\Pi\nabla f(Y^{(k)})\|_F^2\right]
+
\frac{(n-1)\sigma^2}{R}
\\
&=
\mathbb{E}\left[
\sum_{i=1}^n\|\nabla f_i(y_i^{(k)})\|^2
-
\frac{1}{n}
\left\|\sum_{i=1}^n\nabla f_i(y_i^{(k)})\right\|^2
\right]
+
\frac{(n-1)\sigma^2}{R}
\\
&\leq
\mathbb{E}\left[
\sum_{i=1}^n\|\nabla f_i(y_i^{(k)})\|^2
\right]
+
\frac{(n-1)\sigma^2}{R}.
\end{aligned}
\end{equation}

We first record the following auxiliary inequality: for any \(y\in\mathbb{R}^d\),
\begin{equation}
\label{eq:local-gradient-growth}
\sum_{i=1}^n\|\nabla f_i(y)\|^2
\leq
2nL\bigl(f(y)-f^\star\bigr)
+
2nL\Delta_f^\star,
\end{equation}
where
\[
f^\star\triangleq \min_{x\in\mathbb{R}^d} f(x),
\qquad
f_i^\star\triangleq \min_{x\in\mathbb{R}^d} f_i(x),
\qquad
\Delta_f^\star
\triangleq
f^\star-\frac{1}{n}\sum_{i=1}^n f_i^\star .
\]
Indeed, by \(L\)-smoothness, for all \(i\in[n]\) and any
\(y,z\in\mathbb{R}^d\),
\[
f_i(z)
\leq
f_i(y)
+
\langle \nabla f_i(y),z-y\rangle
+
\frac{L}{2}\|z-y\|^2.
\]
Taking \(z=y-\frac{1}{L}\nabla f_i(y)\) and using
\(f_i(z)\geq f_i^\star\), we obtain
\[
f_i^\star
\leq
f_i(y)
-
\frac{1}{2L}\|\nabla f_i(y)\|^2.
\]
Rearranging and summing over \(i=1,\ldots,n\) gives
\eqref{eq:local-gradient-growth}.

Now, using \(L\)-smoothness and \eqref{eq:local-gradient-growth}, we have
\begin{equation}
\label{eq:local-gradient-at-y}
\begin{aligned}
\sum_{i=1}^n \|\nabla f_i(y_i^{(k)})\|^2
&=
\sum_{i=1}^n
\left\|
\nabla f_i(y_i^{(k)})
-
\nabla f_i(\bar{y}^{(k)})
+
\nabla f_i(\bar{y}^{(k)})
-
\nabla f_i(\bar{x}^{(k)})
+
\nabla f_i(\bar{x}^{(k)})
\right\|^2
\\
&\leq
3\sum_{i=1}^n
\Bigl[
\|\nabla f_i(y_i^{(k)})-\nabla f_i(\bar{y}^{(k)})\|^2
+
\|\nabla f_i(\bar{y}^{(k)})-\nabla f_i(\bar{x}^{(k)})\|^2
\\
&\hspace{6.8cm}
+
\|\nabla f_i(\bar{x}^{(k)})\|^2
\Bigr]
\\
&\leq
3L^2\|\Pi Y^{(k)}\|_F^2
+
3nL^2\|\bar{x}^{(k)}-\bar{y}^{(k)}\|^2
+
6nL\bigl(f(\bar{x}^{(k)})-f^\star\bigr)
+
6nL\Delta_f^\star .
\end{aligned}
\end{equation}
Combining \eqref{eq:g-consensus-basic} and \eqref{eq:local-gradient-at-y}
yields
\begin{equation}
\label{eq:g-bound}
\begin{aligned}
\mathbb{E}\left[\|\Pi G^{(k)}\|_F^2\right]
&\leq
3L^2\mathbb{E}\left[\|\Pi Y^{(k)}\|_F^2\right]
+
3nL^2\mathbb{E}\left[\|\bar{x}^{(k)}-\bar{y}^{(k)}\|^2\right]
\\
&\quad
+
6nL\,\mathbb{E}\left[f(\bar{x}^{(k)})-f^\star\right]
+
6nL\Delta_f^\star
+
\frac{(n-1)\sigma^2}{R}.
\end{aligned}
\end{equation}
Substituting \eqref{eq:g-bound} into \eqref{eq:psi-recursion} and then using
\eqref{eq:psi-y}, we obtain
\begin{equation}
\label{eq:psi-recursion-expanded}
\begin{aligned}
\Psi_{k+1}
&\leq
\frac{3\rho^2}{1+2\rho^2}\Psi_k
+
\frac{24\rho^4(1+\rho^2)\gamma^2}{(1-\rho^2)^3}
\Bigl(
3L^2\mathbb{E}\left[\|\Pi Y^{(k)}\|_F^2\right]
\\
&\hspace{3.8cm}
+
3nL^2\mathbb{E}\left[\|\bar{x}^{(k)}-\bar{y}^{(k)}\|^2\right]
+
6nL\,\mathbb{E}\left[f(\bar{x}^{(k)})-f^\star\right]
\Bigr)
\\
&\quad
+
\frac{24\rho^4(1+\rho^2)\gamma^2}{(1-\rho^2)^3}
\left(
6nL\Delta_f^\star
+
\frac{(n-1)\sigma^2}{R}
\right)
\\
&\leq
\left(
\frac{3\rho^2}{1+2\rho^2}
+
\frac{144L^2\lambda\rho^4(1+\rho^2)\gamma^2}{(1-\rho^2)^3}
\right)\Psi_k
+
\frac{144L\rho^4(1+\rho^2)\gamma^2}{(1-\rho^2)^3}
\mathbb{E}\left[f(\bar{x}^{(k)})-f^\star\right]
\\
&\quad
+
\frac{72nL^2\rho^4(1+\rho^2)\gamma^2}{(1-\rho^2)^3}
\mathbb{E}\left[\|\bar{x}^{(k)}-\bar{y}^{(k)}\|^2\right]
\\
&\quad
+
\frac{144nL\rho^4(1+\rho^2)\gamma^2}{(1-\rho^2)^3}\Delta_f^\star
+
\frac{24\rho^4(1+\rho^2)\gamma^2}{(1-\rho^2)^3}
\frac{(n-1)\sigma^2}{R}.
\end{aligned}
\end{equation}

By lemma~\ref{lem:descent-c-sc} in the strongly convex case, we have
\begin{equation}
\label{eq:descent-raw}
\begin{aligned}
&\mathbb{E}\left[f(\bar{x}^{(k+1)})\right]-f(x^\star)
+
\left(
\frac{\theta^2}{2\gamma}
+
\frac{\mu\theta}{2}
\right)
\mathbb{E}\left[
\left\|
\mathbb{E}_k[\bar{z}^{(k+1)}]-x^\star
\right\|^2
\right]
\\
&\leq
(1-\theta)
\left(
\mathbb{E}\left[f(\bar{x}^{(k)})\right]-f(x^\star)
\right)
+
\frac{\theta^2}{2\gamma}
\mathbb{E}\left[
\left\|
\mathbb{E}_{k-1}[\bar{z}^{(k)}]-x^\star
\right\|^2
\right]
\\
&\quad
+
\frac{\theta^2\gamma\sigma^2}{2nR}
\left(
\frac{\gamma L}{(\theta+\mu\gamma)^2}
+
\frac{1}{(\theta+\mu\gamma)^2}
\right)
-
\left(
\frac{\theta^2}{2\gamma}
-
\frac{L\theta^2}{2}
\right)
\mathbb{E}\left[
\left\|
\mathbb{E}_k[\bar{z}^{(k+1)}]-\bar{z}^{(k)}
\right\|^2
\right]
\\
&\quad
-
(1-\theta)\mathbb{E}\left[D_f(\bar{x}^{(k)};Y^{(k)})\right]
+
\frac{L}{2n}
\mathbb{E}\left[\|\Pi Y^{(k)}\|_F^2\right].
\end{aligned}
\end{equation}
Furthermore, by the definition of $D_f$, 
\begin{equation}
\label{eq:Df-lower-bound}
D_f(\bar{x}^{(k)};Y^{(k)})
\geq
\frac{\mu}{2}
\|\bar{x}^{(k)}-\bar{y}^{(k)}\|^2
+
\frac{\mu}{2n}
\|\Pi Y^{(k)}\|_F^2 .
\end{equation}
Substituting \eqref{eq:Df-lower-bound} into \eqref{eq:descent-raw} gives
\begin{equation}
\label{eq:descent-with-consensus}
\begin{aligned}
&\mathbb{E}\left[f(\bar{x}^{(k+1)})\right]-f(x^\star)
+
\left(
\frac{\theta^2}{2\gamma}
+
\frac{\mu\theta}{2}
\right)
\mathbb{E}\left[
\left\|
\mathbb{E}_k[\bar{z}^{(k+1)}]-x^\star
\right\|^2
\right]
\\
&\leq
(1-\theta)
\left(
\mathbb{E}\left[f(\bar{x}^{(k)})\right]-f(x^\star)
\right)
+
(1-\theta)
\left(
\frac{\theta^2}{2\gamma}
+
\frac{\mu\theta}{2}
\right)
\mathbb{E}\left[
\left\|
\mathbb{E}_{k-1}[\bar{z}^{(k)}]-x^\star
\right\|^2
\right]
\\
&\quad
+
\left(
\frac{L}{2n}
-
(1-\theta)\frac{\mu}{2n}
\right)
\mathbb{E}\left[\|\Pi Y^{(k)}\|_F^2\right]
-
(1-\theta)\frac{\mu}{2}
\mathbb{E}\left[\|\bar{x}^{(k)}-\bar{y}^{(k)}\|^2\right]
\\
&\quad
+
\frac{\theta^2\gamma\sigma^2}{2nR}
\left(
\frac{\gamma L}{(\theta+\mu\gamma)^2}
+
\frac{1}{(\theta+\mu\gamma)^2}
\right)
\\
&\leq
(1-\theta)
\left(
\mathbb{E}\left[f(\bar{x}^{(k)})\right]-f(x^\star)
\right)
+
(1-\theta)
\left(
\frac{\theta^2}{2\gamma}
+
\frac{\mu\theta}{2}
\right)
\mathbb{E}\left[
\left\|
\mathbb{E}_{k-1}[\bar{z}^{(k)}]-x^\star
\right\|^2
\right]
\\
&\quad
+
2\lambda
\left(
\frac{L}{2n}
-
(1-\theta)\frac{\mu}{2n}
\right)\Psi_k
-
(1-\theta)\frac{\mu}{2}
\mathbb{E}\left[\|\bar{x}^{(k)}-\bar{y}^{(k)}\|^2\right]
\\
&\quad
+
\frac{\theta^2\gamma\sigma^2}{2nR}
\left(
\frac{\gamma L}{(\theta+\mu\gamma)^2}
+
\frac{1}{(\theta+\mu\gamma)^2}
\right).
\end{aligned}
\end{equation}

To simplify notation, define
\[
\alpha(\rho,\gamma)
\triangleq
\frac{3\rho^2}{1+2\rho^2}
+
\frac{144L^2\lambda\rho^4(1+\rho^2)\gamma^2}{(1-\rho^2)^3},
\]
\[
B(\rho,\gamma)
\triangleq
\frac{144L\rho^4(1+\rho^2)\gamma^2}{(1-\rho^2)^3},
\qquad
\tau(\rho,\gamma)
\triangleq
\frac{72nL^2\rho^4(1+\rho^2)\gamma^2}{(1-\rho^2)^3},
\]
\[
C(\rho,\gamma)
\triangleq
\frac{144nL\rho^4(1+\rho^2)\gamma^2}{(1-\rho^2)^3}\Delta_f^\star
+
\frac{24\rho^4(1+\rho^2)\gamma^2}{(1-\rho^2)^3}
\frac{(n-1)\sigma^2}{R},
\]
\[
D(\rho,\gamma)
\triangleq
\frac{\theta^2\gamma\sigma^2}{2nR}
\left(
\frac{\gamma L}{(\theta+\mu\gamma)^2}
+
\frac{1}{(\theta+\mu\gamma)^2}
\right),
\]
and
\[
F_k
\triangleq
\mathbb{E}\left[f(\bar{x}^{(k)})\right]-f(x^\star)
+
\left(
\frac{\theta^2}{2\gamma}
+
\frac{\mu\theta}{2}
\right)
\mathbb{E}\left[
\left\|
\mathbb{E}_{k-1}[\bar{z}^{(k)}]-x^\star
\right\|^2
\right].
\]

Then the previous estimates in lemma~\ref{lem:descent-c-sc} and  lemma~\ref{lem:consensus-c-sc} can be summarized as
\begin{equation}
\label{eq:coupled-recursion}
\left\{
\begin{aligned}
\Psi_{k+1}
&\leq
\alpha(\rho,\gamma)\Psi_k
+
B(\rho,\gamma)F_k
+
C(\rho,\gamma)
+
\tau(\rho,\gamma)
\mathbb{E}\left[\|\bar{x}^{(k)}-\bar{y}^{(k)}\|^2\right],
\\
F_{k+1}
&\leq
(1-\theta)F_k
+
\frac{\lambda L}{n}\Psi_k
+
D(\rho,\gamma)
-
(1-\theta)\frac{\mu}{2}
\mathbb{E}\left[\|\bar{x}^{(k)}-\bar{y}^{(k)}\|^2\right].
\end{aligned}
\right.
\end{equation}
Therefore, for any positive constant \(u>0\),
\begin{equation}
\label{eq:weighted-recursion}
\begin{aligned}
F_{k+1}+u\Psi_{k+1}
&\leq
(uB+1-\theta)F_k
+
\left(\frac{\lambda L}{n}+\alpha u\right)\Psi_k
+
uC+D
\\
&\quad
-
\left(
(1-\theta)\frac{\mu}{2}
-
u\tau
\right)
\mathbb{E}\left[\|\bar{x}^{(k)}-\bar{y}^{(k)}\|^2\right].
\end{aligned}
\end{equation}
If \(u\) is chosen such that
\begin{equation}
\label{eq:u-conditions}
\left\{
\begin{aligned}
\frac{\lambda L}{n}+\alpha u
&\leq
u(uB+1-\theta),
\\
uB
&\leq
\frac{\theta}{2},
\\
u\tau
&\leq
(1-\theta)\frac{\mu}{2},
\end{aligned}
\right.
\end{equation}
then
\begin{equation}
\label{eq:contracted-recursion}
F_{k+1}+u\Psi_{k+1}
\leq
\left(1-\frac{\theta}{2}\right)
(F_k+u\Psi_k)
+
uC+D .
\end{equation}
Iterating \eqref{eq:contracted-recursion} gives
\begin{equation}
\label{eq:iterated-recursion}
F_{k+1}+u\Psi_{k+1}
\leq
\left(1-\frac{\theta}{2}\right)^{k+1}
(F_0+u\Psi_0)
+
\sum_{t=0}^k
\left(1-\frac{\theta}{2}\right)^t
(uC+D).
\end{equation}

We now bound the initial term and the accumulated error term. Since
\(\Psi_0=0\), \(\theta^2/(2\gamma)=\mu/8\), and
\(\mu\theta/2\leq \mu/4\) under \(\theta\leq 1/2\),
\begin{equation}
\label{eq:F0-bound}
\begin{aligned}
F_0
&=
f(\bar{x}^{(0)})-f(x^\star)
+
\left(
\frac{\theta^2}{2\gamma}
+
\frac{\mu\theta}{2}
\right)
\|\bar{z}^{(0)}-x^\star\|^2
\\
&\leq
\frac{L}{2}\Delta_x
+
\left(
\frac{\mu}{8}
+
\frac{\mu}{4}
\right)\Delta_x
\leq
L\Delta_x .
\end{aligned}
\end{equation}
Moreover,
\begin{equation}
\label{eq:uC-bound}
uC
=
\frac{L}{(1-\theta)(1+2\rho^2)}
\left(
\frac{72L\rho^2(1+\rho^2)\gamma^2}{1-\rho^2}\Delta_f^\star
+
\frac{12\rho^2(1+\rho^2)\gamma^2}{1-\rho^2}
\frac{(n-1)\sigma^2}{nR}
\right),
\end{equation}
and
\begin{equation}
\label{eq:D-bound}
\begin{aligned}
D
&=
\frac{\theta^2\gamma\sigma^2}{2nR}
\frac{1+\gamma L}{(\theta+\mu\gamma)^2}\\
&=
\frac{\theta^2\gamma\sigma^2}{2nR}
\frac{1+\gamma L}{\mu\gamma(1/2+\sqrt{\mu\gamma})^2}
\\
&\leq
\frac{8\theta^2\sigma^2}{3nR\mu},
\end{aligned}
\end{equation}
where the last inequality follows from \(\gamma<1/(3L)\). Therefore,
\begin{equation}
\label{eq:sum-error-bound}
\begin{aligned}
&\sum_{t=0}^k
\left(1-\frac{\theta}{2}\right)^t
(uC+D)\\
&\leq
\frac{2}{\theta}
\left[
\frac{L}{(1-\theta)(1+2\rho^2)}
\left(
\frac{72L\rho^2(1+\rho^2)\gamma^2}{1-\rho^2}\Delta_f^\star
+
\frac{12\rho^2(1+\rho^2)\gamma^2}{1-\rho^2}
\frac{(n-1)\sigma^2}{nR}
\right)
+
\frac{8\theta^2\sigma^2}{3nR\mu}
\right]
\\
&=
\frac{2}{\theta}
\frac{L}{(1-\theta)(1+2\rho^2)}
\left(
\frac{72L\rho^2(1+\rho^2)\gamma^2}{1-\rho^2}\Delta_f^\star
+
\frac{12\rho^2(1+\rho^2)\gamma^2}{1-\rho^2}
\frac{(n-1)\sigma^2}{nR}
\right)
+
\frac{16\theta\sigma^2}{3nR\mu}.
\end{aligned}
\end{equation}

Choose
\begin{equation}
\label{eq:gamma-choice}
\gamma
=
\min\left\{
\frac{1}{3L},
\,
\frac{16R^2}{T^2\mu}
\left[
\log\left(
\frac{nT\mu L\Delta_x}{\sigma^2}
\right)
\right]^2
\right\},
\end{equation}
and
\begin{equation}
\label{eq:R-choice}
R
=
\left\lceil
\max\left\{
\frac{-2\log\sqrt{1-\beta}}{\sqrt{1-\beta}},
\,
\frac{-\log\frac{\sigma^2}{nL\Delta_f^\star}}{\sqrt{1-\beta}},
\,
\frac{-\log\frac{5}{1536}}{2\sqrt{1-\beta}},
\,
\frac{\log(2\sqrt{2}n)}{\sqrt{1-\beta}},
\,
\frac{-\log\left(\frac{15n}{18432}\sqrt{\frac{\mu}{L}}\right)}{\sqrt{1-\beta}},
\,
\frac{-\log\frac{15\mu}{9214L}}{\sqrt{1-\beta}}
\right\}
\right\rceil .
\end{equation}
Since
\[
\rho
=
\sqrt{2}\left(1-\sqrt{1-\beta}\right)^R
\leq
\sqrt{2}e^{-\sqrt{1-\beta}R},
\]
and using \(\theta<1/2\), \(\rho\leq 1/(2n)\), and
\(\gamma<1/(3L)\), we have
\begin{equation}
\label{eq:Gstar-term-bound}
\begin{aligned}
\frac{2}{\theta}
\frac{L}{(1-\theta)(1+2\rho^2)}
\frac{72L\rho^2(1+\rho^2)\gamma^2}{1-\rho^2}
\Delta_f^\star
&\leq
1440L^2\rho^2\gamma \Delta_f^\star
\sqrt{\frac{\gamma}{\mu}}
\\
&\leq
480L\rho^2\Delta_f^\star
\sqrt{\frac{\gamma}{\mu}} .
\end{aligned}
\end{equation}
Since
\[
R
\geq
\left\lceil
\max\left\{
\frac{-2\log\sqrt{1-\beta}}{\sqrt{1-\beta}},
\,
\frac{-\log\frac{\sigma^2}{nL\Delta_f^\star}}{\sqrt{1-\beta}}
\right\}
\right\rceil,
\]
we have
\[
\rho R
=
\sqrt{2}e^{-\sqrt{1-\beta}R}R
\leq 2,
\qquad
\rho
\leq
\frac{\sigma^2}{nL\Delta_f^\star}.
\]
Therefore,
\[
L\rho^2\Delta_f^\star
\leq
\frac{2\sigma^2}{nR},
\]
and hence
\begin{equation}
\label{eq:Gstar-term-final}
480L\rho^2\Delta_f^\star
\sqrt{\frac{\gamma}{\mu}}
\leq
\frac{720\sigma^2}{nR}
\sqrt{\frac{\gamma}{\mu}} .
\end{equation}
Similarly,
\begin{equation}
\label{eq:variance-term-bound}
\begin{aligned}
\frac{2}{\theta}
\frac{L}{(1-\theta)(1+2\rho^2)}
\frac{12\rho^2(1+\rho^2)\gamma^2}{1-\rho^2}
\frac{(n-1)\sigma^2}{nR}
&\leq
\frac{240L\rho^2\gamma(n-1)\sigma^2}{nR}
\sqrt{\frac{\gamma}{\mu}}
\\
&\leq
\frac{20\sigma^2}{nR}
\sqrt{\frac{\gamma}{\mu}} .
\end{aligned}
\end{equation}
Combining the preceding estimates yields
\begin{equation}
\label{eq:finite-time-bound}
\mathbb{E}\left[f(\bar{x}^{(k+1)})-f^\star\right]
\leq
\left(1-\frac{\sqrt{\mu\gamma}}{4}\right)^{k+1}
L\Delta_x
+
\frac{2228\sigma^2}{3nR}
\sqrt{\frac{\gamma}{\mu}},
\end{equation}
where \(k=T/R\). Finally, we obtain
\begin{equation}
\label{eq:final-rate}
\mathbb{E}\left[f(\bar{x}^{(k+1)})-f^\star\right]
=
\mathcal{O}\left(
\left(
1-q\sqrt{\frac{\mu}{L}}
\right)^{T/R}
+
\frac{\sigma^2}{n\mu T}
\log\frac{nT\mu L\Delta_x}{\sigma^2}
\right),
\end{equation}
where \(q>0\) is an absolute constant. Therefore, to guarantee
\[
\mathbb{E}\left[f(\bar{x}^{(k)})-f^\star\right]
=
\mathcal{O}(\epsilon),
\]
it suffices to require
\[
\log\left(\frac{n\mu L\Delta_x}{\sigma^2}\right)
\frac{\sigma^2}{\mu nT}
=
\mathcal{O}(\epsilon),
\qquad
\frac{\log T}{\mu nT}
=
\mathcal{O}(\epsilon),
\qquad
\left(
1-q\sqrt{\frac{\mu}{L}}
\right)^{T/R}
L\Delta_x
=
\mathcal{O}(\epsilon).
\]
These conditions are satisfied by
\begin{equation}
\label{eq:T-complexity}
\begin{aligned}
T
&=
\mathcal{O}\left(
\frac{
\log\left(\frac{n\mu L\Delta_x}{\sigma^2}\right)\sigma^2
}{
\mu n\epsilon
}
\right)
+
\mathcal{O}\left(
\frac{
\log\left(\frac{n\mu}{\epsilon}\right)
}{
\mu n\epsilon
}
\right)
+
\mathcal{O}\left(
R\sqrt{\frac{L}{\mu}}
\log\left(\frac{L\Delta_x}{\epsilon}\right)
\right)
\\
&=
\widetilde{\mathcal{O}}\left(
\frac{
\sigma^2
}{
\mu n\epsilon
}\log\left(\frac{L\Delta_x}{\epsilon}\right)
+
\sqrt{\frac{L/\mu}{1-\beta}}
\log\left(\frac{L\Delta_x}{\epsilon}\right)
\right)
\end{aligned}
\end{equation}
where the passage to the \(\widetilde{\mathcal O}\)-bound follows from
\[
R
=
\widetilde{\mathcal{O}}\left(
\frac{1}{\sqrt{1-\beta}}
\right).
\]
\end{proof}

\bibliographystyle{plain}
\bibliography{references}
\end{document}